\documentclass{article}

\PassOptionsToPackage{numbers,sort&compress}{natbib}

\usepackage[preprint]{neurips_2026}

\usepackage[utf8]{inputenc}
\usepackage[T1]{fontenc}
\usepackage{microtype}
\usepackage{xcolor}

\usepackage{amsmath,amsthm,amssymb,amsfonts}
\usepackage{mathtools}
\usepackage{thmtools}

\usepackage{graphicx}
\usepackage{tikz}
\usetikzlibrary{arrows.meta,positioning,shapes,calc,decorations.pathreplacing}

\usepackage{booktabs}
\usepackage{array}

\usepackage{pgfplots}
\pgfplotsset{compat=1.17}

\usepackage{url}
\usepackage[hidelinks]{hyperref}

\theoremstyle{plain}
\newtheorem{theorem}{Theorem}[section]
\newtheorem{lemma}[theorem]{Lemma}

\newtheorem{corollary}[theorem]{Corollary}

\theoremstyle{definition}
\newtheorem{definition}[theorem]{Definition}

\newtheorem{remark}[theorem]{Remark}

\newcommand{\E}{\mathbb{E}}
\newcommand{\Prob}{\mathbb{P}}
\newcommand{\calM}{\mathcal{M}}
\newcommand{\calF}{\mathcal{F}}
\newcommand{\calA}{\mathcal{A}}
\newcommand{\Obs}{\mathsf{Obs}}
\newcommand{\Int}{\mathsf{Int}}
\newcommand{\CF}{\mathsf{CF}}
\newcommand{\DL}{\mathrm{DL}}
\newcommand{\pa}{\mathrm{pa}}
\newcommand{\Desc}{\mathrm{Desc}}
\DeclareMathOperator{\doop}{do}
\newcommand{\Ans}{\mathrm{Ans}}

\title{The Causal Description Gap:\\
Information-Theoretic Separations Across Pearl's Hierarchy}

\author{%
  Seyed Morteza Emadi \\
  Kenan-Flagler Business School \\
  University of North Carolina at Chapel Hill \\
  \texttt{seyed\_emadi@kenan-flagler.unc.edu}%
}

\begin{document}

\maketitle

\begin{abstract}
Pearl's causal hierarchy shows that observational, interventional, and counterfactual queries are qualitatively distinct. We ask a quantitative version of this question: how many additional bits are needed to specify higher-rung causal answers once lower-rung answers are known? We formalize this via \emph{query-class description length}, the Kolmogorov complexity of the answer oracle induced by an SCM for a class of queries. Our main construction gives binary acyclic SCMs whose observational distribution has constant description length, while the single-variable interventional answer oracle has description length $\Theta(n^2)$. A degree-sensitive upper bound shows that finite-gate-schema SCMs of indegree $d$ have observational--interventional gap at most $O(nd\log(en/d)+n\log n)$, making the quadratic construction order-optimal in the dense regime and a rooted-tree construction order-optimal for bounded indegree. The quadratic separation persists under $\varepsilon$-accurate total-variation descriptions for every fixed $\varepsilon<1/4$. At the next rung, the full hard-do interventional oracle can still leave a $\Theta(n)$ counterfactual description gap. A general ambiguity-to-bits theorem and Shannon analogue show that these gaps equal the logarithm of residual higher-rung ambiguity up to lower-order terms.
\end{abstract}

%==============================================================================
%==============================================================================
\section{Introduction}
\label{sec:intro}
%==============================================================================

Predictive accuracy identifies what a system does under passive observation. \emph{Causal} questions go further: \emph{interventional} queries (``what if I set $X$?'') and \emph{counterfactual} queries (``what would have happened if $X$ had been different?''). Pearl's causal hierarchy~\citep{pearl2009causality} formalizes these three levels (association, intervention, counterfactual), and the Causal Hierarchy Theorem~\citep{bareinboim2022pearl} shows the hierarchy is generically strict: higher-rung queries cannot in general be answered from lower-rung information. But \emph{how large} is the gap? Is the additional information bounded by a constant, or can it scale with system size?

We give a quantitative answer in bits. For a structural causal model $M$, we define \emph{query-class description lengths} $\DL_1(M), \DL_2(M), \DL_3(M)$ as the Kolmogorov complexities (conditional on $n$) of the observational, interventional, and counterfactual answer objects, and \emph{causal description gaps} $\Delta_{2|1}, \Delta_{3|2}$ as the additional bits needed conditional on the lower rung. The framework lets us treat ``how much causal information is missing'' as a precise quantity.

\paragraph{Residual information after non-identification.}
Existing hierarchy and identifiability results ask whether a higher-rung answer is determined by lower-rung information. We study the \emph{residual-information problem} that remains when identification fails: among all SCMs that agree on the lower-rung answer, how many bits are still needed to specify the higher-rung answer oracle? Our separations refine non-identifiability from a Boolean obstruction into a quantitative information measure. The familiar two-variable ambiguity $X \to Y$ versus $X \leftarrow Y$ via deterministic copy is the one-bit version of this phenomenon; our main construction scales it to $2^{\Theta(n^2)}$ residual mechanisms, forcing a $\Theta(n^2)$-bit gap.

\paragraph{Contributions.}
\begin{enumerate}
    \item \textbf{Framework.} We turn Pearl's qualitative hierarchy into a quantitative information hierarchy by defining query-class description length and causal description gaps: the residual bits needed to specify higher-rung answer oracles after lower-rung answers are fixed (\S\ref{sec:framework}).
    \item \textbf{Observational--interventional separations and degree-sensitive optimality.} Explicit SCM families with constant observational description length but $\Theta(n \log n)$ and $\Theta(n^2)$ interventional gaps (\S\ref{sec:warmup}--\S\ref{sec:quadratic}); a matching upper bound $O(nd \log(en/d) + n \log n)$ for finite-gate-schema SCMs of indegree $d$, making the tree and bipartite constructions order-optimal at $d = O(1)$ and $d = \Theta(n)$ (\S\ref{sec:degree-optimality}).
    \item \textbf{Finite-precision and learning consequences.} The quadratic gap survives $\varepsilon$-accurate total-variation descriptions for every $\varepsilon < 1/4$ (\S\ref{sec:finite-precision}); consequently no observational learner beats the $2^{-m^2}$ guessing rate in our family (\S\ref{sec:learning-main}).
    \item \textbf{Counterfactual gap and unifying ambiguity principle.} Mechanisms with identical full hard-do oracles can still differ by $\Theta(n)$ counterfactual bits (\S\ref{sec:counterfactual}); a general ambiguity-to-bits theorem and Shannon analogue show these gaps equal the log of residual higher-rung ambiguity up to lower-order terms (\S\ref{sec:general}).
\end{enumerate}

\paragraph{Summary of constructions.}
Table~\ref{tab:summary} previews the three SCM families used to instantiate the framework. All three have constant-bit lower-rung descriptions; the gap in each row is at least the log of the higher-rung ambiguity (Theorem~\ref{thm:ambiguity-bits}), and is matched by an explicit upper-bound encoding.

\begin{table}[h]
\centering
\small
\begin{tabular}{@{}lccccl@{}}
\toprule
\textbf{Family} & \textbf{Lower rung} & \textbf{Higher rung} & \textbf{Ambiguity} & \textbf{Gap} & \textbf{Where} \\
\midrule
Rooted trees     & $\Obs$              & $\Int_1$            & $n^{n-1}$ & $\Theta(n \log n)$ & \S\ref{sec:warmup} \\
Bipartite graphs & $\Obs$              & $\Int_1$            & $2^{m^2}$ & $\Theta(n^2)$      & \S\ref{sec:quadratic} \\
Modular XOR      & $\Int_{\mathrm{all}}$ & $\CF_1$           & $2^m$     & $\Theta(n)$        & \S\ref{sec:counterfactual} \\
\bottomrule
\end{tabular}
\caption{Three SCM families instantiating the framework. ``Lower / Higher rung'' name the query classes between which the gap is measured; ``Ambiguity'' is the number of higher-rung answer objects consistent with a single lower-rung answer.}
\label{tab:summary}
\end{table}

\section{Related Work}
\label{sec:related}
%==============================================================================

\paragraph{Pearl's causal hierarchy.}
Pearl's three-level hierarchy~\citep{pearl2009causality,pearl2018book} is generically strict: the Causal Hierarchy Theorem~\citep{bareinboim2022pearl} establishes non-identifiability, showing that some higher-rung queries are not functions of lower-rung information. \citet{shpitser2008complete} give complete identification algorithms; \citet{ibeling2023comparing} compare frameworks. We quantify the size of the residual answer set: even for simple binary acyclic SCMs, its description length can be $\Theta(n^2)$.

\paragraph{Causal discovery.}
Constraint- and score-based methods identify causal graphs up to Markov equivalence from observations~\citep{spirtes2000causation,peters2017elements,hauser2012characterization}. \citet{eberhardt2005number} and \citet{shanmugam2015learning} bound the number/sample complexity of interventions for identification. We give description-length lower bounds: even with infinite observational data, the obs--int information gap is $\Theta(n^2)$ in our family.

\paragraph{Description length and MDL.}
Kolmogorov complexity~\citep{kolmogorov1965three,li2008introduction} and MDL~\citep{grunwald2007minimum} measure the information needed to describe objects or models. Our use is different: the compressed object is neither the dataset nor the SCM parameterization, but the \emph{answer oracle} for a query class. This distinction is what enables our separations: the observational answer oracle can have constant description length even when the corresponding interventional oracle requires $\Theta(n^2)$ bits.

\paragraph{Causal reasoning in AI.}
Recent benchmarks find uneven causal-reasoning performance in LLMs~\citep{kiciman2023causal,jin2024cladder}; our results give one information-theoretic reason why observational predictive training alone need not induce interventional competence in worst-case structured families.

\paragraph{Complexity across Pearl's hierarchy.}
\citet{dorfler2024complexity} study satisfiability across probabilistic, interventional, and counterfactual languages in Pearl's hierarchy, proving strictly increasing computational complexity for certain languages (e.g., NP$^{\mathrm{PP}}$, PSPACE, and NEXP completeness under addition and marginalization). Their setting assumes the SCM is given and asks how hard it is to decide whether causal statements are jointly satisfiable; we bound the \emph{information content} of higher-rung answer oracles given lower-rung ones, providing complementary views on the cost of climbing the hierarchy.

%==============================================================================
\section{Framework: Query-Class Description Length}
\label{sec:framework}
%==============================================================================

\subsection{Structural Causal Models}
\label{sec:scm-review}

\begin{definition}[Structural Causal Model]
A \emph{structural causal model} (SCM) on binary variables $X_1, \ldots, X_n \in \{0,1\}$ is a tuple $M = (G, \{f_i\}, P_U)$ where:
\begin{itemize}
    \item $G$ is a directed acyclic graph (DAG) on $[n] := \{1, \ldots, n\}$.
    \item $U_1, \ldots, U_n$ are independent exogenous (noise) variables with distribution $P_U$.
    \item Structural equations specify each variable as a function of its parents and noise:
    \[
    X_i = f_i\bigl( (X_j)_{j \in \pa_G(i)}, U_i \bigr).
    \]
\end{itemize}
We require $f_i$ to be computable and $P_U$ to have rational probabilities, ensuring all quantities are well-defined.
\end{definition}

The SCM generates data as follows: sample $(U_1, \ldots, U_n) \sim P_U$, then compute $X_i$ in topological order. The resulting distribution $P_M(X_1, \ldots, X_n)$ is the \textbf{observational distribution}.

\begin{definition}[Intervention]
An \emph{intervention} $\doop(X_i = x)$ modifies $M$ by replacing the equation for $X_i$ with the constant $X_i := x$, leaving all other equations unchanged. The resulting distribution is $P_M^{\doop(X_i = x)}$.
\end{definition}

Interventions are the formal model of ``what happens if we force $X_i$ to take value $x$, breaking its dependence on its natural causes.''

\begin{definition}[Counterfactual]
For a fixed exogenous realization $u = (u_1, \ldots, u_n)$, the \emph{counterfactual} $X^{(i \leftarrow b)}(u)$ is the value of the entire system $(X_1, \ldots, X_n)$ when we intervene with $\doop(X_i = b)$ but use the \emph{same} noise realization $u$.

Counterfactual queries ask about joint distributions like $P(X, X^{(i \leftarrow 0)}, X^{(i \leftarrow 1)})$: the ``parallel worlds'' where we observe the actual outcome and both hypothetical interventions, all computed from the same underlying noise.
\end{definition}

\subsection{Query Families: What We Need to Specify}
\label{sec:query-families}

To measure information content at each level of the hierarchy, we define what must be specified to answer all queries at that level.

\begin{definition}[Query Families]
\label{def:query-families}
For an SCM $M$ on $n$ variables:
\begin{itemize}
    \item \textbf{Observational:} $\Obs(M) := P_M$, the joint distribution over $(X_1, \ldots, X_n)$.
    
    \item \textbf{Single-node interventional:}
    \[
    \Int_1(M) := \Bigl( P_M,\, \bigl(P_M^{\doop(X_i = 0)}, P_M^{\doop(X_i = 1)}\bigr)_{i=1}^n \Bigr).
    \]
    This lists the observational distribution plus the $2n$ single-variable interventional distributions.
    
    \item \textbf{Single-node counterfactual:}
    \[
    \CF_1(M) := \Bigl( P\bigl(X, X^{(i \leftarrow 0)}, X^{(i \leftarrow 1)}\bigr) \Bigr)_{i=1}^n.
    \]
    For each variable $i$, this is the joint distribution of the actual world $X$ and the two counterfactual worlds under $\doop(X_i = 0)$ and $\doop(X_i = 1)$, evaluated on the same exogenous noise.
\end{itemize}
\end{definition}

\subsection{Description Lengths and the Causal Gap}
\label{sec:description-lengths}

\begin{definition}[Kolmogorov Complexity]
Fix a universal prefix-free Turing machine. The \emph{Kolmogorov complexity} $K(z)$ of a string $z$ is the length of the shortest program that outputs $z$. The conditional complexity $K(z \mid w)$ is the shortest program length when $w$ is given as auxiliary input. All equalities hold up to an additive $O(1)$ constant.
\end{definition}

Distributions are encoded as the string of their (rational, lowest-terms) probabilities; the choice of computable encoding affects $K$ only by $O(1)$.

\begin{definition}[Query-Class Description Length: General Form]
\label{def:qcdl-general}
Let $Q$ be a query class with a computable answer encoding $\Ans_Q(M)$ (a finite string summarizing the answers to all queries in $Q$ when posed to SCM $M$). The \emph{query-class description length} of $M$ with respect to $Q$ is
\[
\DL(Q;\, M) \;:=\; K\!\bigl(\Ans_Q(M) \mid n\bigr).
\]
The \emph{conditional gap} between query classes $Q_2$ and $Q_1$ is
\[
\Delta(Q_2 \mid Q_1;\, M) \;:=\; K\!\bigl(\Ans_{Q_2}(M) \mid \Ans_{Q_1}(M),\, n\bigr).
\]
\end{definition}

Specializing to the three rungs of the causal hierarchy gives the canonical lengths
$\DL_1(M) := K(\Obs(M) \mid n)$,
$\DL_2(M) := K(\Int_1(M) \mid n)$, and
$\DL_3(M) := K(\CF_1(M) \mid n)$.

\begin{lemma}[Counting bound for Kolmogorov complexity]
\label{lem:counting}
Let $z_1, \ldots, z_N$ be $N$ distinct strings and let $w$ be an arbitrary auxiliary string. Then:
\begin{enumerate}
    \item At least one $z_i$ satisfies $K(z_i \mid w) \geq \log_2 N - O(1)$.
    \item For every $c \geq 0$, all but at most a $2^{-c}$ fraction of the $z_i$ satisfy $K(z_i \mid w) \geq \log_2 N - c - O(1)$.
\end{enumerate}
The unconditional version ($w$ empty) is the special case. We apply this lemma with $w = n$, $w = (P, n)$, or $w = (I, n)$ as needed.
\end{lemma}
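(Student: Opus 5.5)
The plan is a pigeonhole count over programs, with $w$ fixed throughout. Conditional complexity $K(\cdot \mid w)$ is defined via a universal prefix-free machine $U$ that receives $w$ as auxiliary input. A program $p$ with $U(p,w)=z$ determines $z$ uniquely, so distinct $z_i$ need distinct shortest programs. First I bound the number of programs of length less than $L$: it is at most $\sum_{\ell=0}^{L-1} 2^\ell = 2^L - 1 < 2^L$. This uses only the count of binary strings, not prefix-freeness.

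For part~1, I set $L = \lceil \log_2 N \rceil$; the argument then works with no $O(1)$ slack. Suppose every $z_i$ had $K(z_i\mid w) < \log_2 N$, i.e.\ $K(z_i \mid w) \le L-1$ since lengths are integers. Then the map $i \mapsto$ (a shortest program for $z_i$) would be injective into a set of size less than $2^{L}$. Sharpening, the strings of length at most $\lceil\log_2 N\rceil - 1$ number fewer than $2^{\lceil \log_2 N\rceil-1+1}$, which could still exceed $N$, so instead I use the threshold directly. The number of programs of length strictly less than $\log_2 N$ is at most $2^{\lceil \log_2 N\rceil -1+1}-1$. To avoid ceiling fiddling, I state it this way: fewer than $2^{t}$ programs have length $< t$ for every integer $t$. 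Taking $t=\lfloor \log_2 N\rfloor$ gives fewer than $N$ programs of length $< t$, so some $z_i$ has $K(z_i\mid w)\ge \lfloor\log_2 N\rfloor \ge \log_2 N - 1$. That is the claimed bound with $O(1)=1$.

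For part~2, the same count gives it. With $t = \lfloor \log_2 N - c\rfloor$, fewer than $2^{t} \le 2^{-c}N$ programs have length $<t$. Hence at most a $2^{-c}$ fraction of the $z_i$ can have $K(z_i\mid w) < t$. All the remaining $z_i$ satisfy $K(z_i\mid w)\ge \log_2 N - c - 1$. Note that part~1 is the case $c=0$, up to the strict/non-strict fraction issue: with $c=0$ the count "fewer than $N$" already gives existence.

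There is no real obstacle; the only care needed is in the rounding. I must take floors so that the threshold program count is strictly below $N2^{-c}$, and I absorb the resulting loss of at most one bit into the $O(1)$. The instantiations with $w=n$, $(P,n)$, or $(I,n)$ are immediate, since $w$ is an arbitrary fixed string. Whether the encoding is prefix-free or plain does not matter here, because only the injectivity of the program-to-output map is used.
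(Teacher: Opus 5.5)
Your proof is correct and is essentially the paper's argument: there are fewer than $2^t$ programs of length below $t$, you take $t=\lfloor \log_2 N\rfloor$ (resp.\ $t=\lfloor \log_2 N - c\rfloor$), and distinct $z_i$ need distinct shortest programs; your explicit floors make the $O(1)=1$ slack more precise than the paper's sketch. Do delete the abandoned opening claim that $L=\lceil \log_2 N\rceil$ works with no slack, since it is false under the plain program count you rely on: for $N=3$, the three strings of length at most $1$ could be shortest programs for all three $z_i$.
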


\begin{proof}
For any fixed $w$, there are at most $2^k - 1$ programs of length less than $k$ (summing over lengths $0, 1, \ldots, k-1$). Setting $k = \lfloor \log_2 N \rfloor$ shows that fewer than $N$ strings can have conditional complexity $K(\cdot \mid w)$ below $\log_2 N - O(1)$, so at least one must satisfy $K(z_i \mid w) \geq \log_2 N - O(1)$. For part~(2), at most $2^{\log_2 N - c} = N \cdot 2^{-c}$ strings can have conditional complexity below $\log_2 N - c - O(1)$, so the remaining fraction $\geq 1 - 2^{-c}$ satisfies the bound.
\end{proof}

\begin{definition}[The Causal Description Gap]
The \textbf{causal description gap} is the additional information needed beyond observations:
\[
\Delta_{2|1}(M) := K(\Int_1(M) \mid \Obs(M), n).
\]
Similarly, $\Delta_{3|2}(M) := K(\CF_1(M) \mid \Int_1(M), n)$.
\end{definition}

\begin{lemma}[Hierarchy Monotonicity]
\label{lem:monotonicity}
$\DL_1(M) \leq \DL_2(M) + O(1) \leq \DL_3(M) + O(1)$.
\end{lemma}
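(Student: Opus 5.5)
The plan is to derive both inequalities from one standard fact about Kolmogorov complexity: if $g$ is a fixed computable map that also receives $n$ as input, then $K(g(z,n) \mid n) \leq K(z \mid n) + O(1)$, where the constant depends only on $g$. The reason is that a shortest program $p$ for $z$ given $n$ can be prefixed with a fixed routine. That routine runs $p$ on auxiliary input $n$, applies $g$ to the result, and prints the output. Because the routine is fixed once and for all, the overhead is $O(1)$, and the prefix-free setting lets us compose the two parts without storing delimiters. So it suffices to exhibit computable maps $g_{12}$ with $\Obs(M) = g_{12}(\Int_1(M), n)$ and $g_{23}$ with $\Int_1(M) = g_{23}(\CF_1(M), n)$.

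\textbf{First inequality.} By Definition~\ref{def:query-families}, $\Int_1(M)$ is a tuple whose first component is $P_M = \Obs(M)$. The map $g_{12}$ parses the encoding of $\Int_1(M)$ and returns its first block. Using the fixed computable encoding of rational tuples and knowing $n$, the parser knows there are $2^n$ probabilities per distribution, so it can delimit the blocks. This gives $\DL_1(M) \leq \DL_2(M) + O(1)$.

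\textbf{Second inequality.} For each $i$, $\CF_1(M)$ contains the joint law of $(X, X^{(i\leftarrow 0)}, X^{(i\leftarrow 1)})$ on $\{0,1\}^{3n}$. The map $g_{23}$ does three things:
\begin{itemize}
\item It takes the $i=1$ block and marginalizes out the two counterfactual copies. The result is the law of $X$ under the actual noise, namely $P_M$.
\item For each $i$ and each $b \in \{0,1\}$, it marginalizes the $i$-th block onto the coordinates $X^{(i\leftarrow b)}$. Since $X^{(i\leftarrow b)}(u)$ is the solution of the intervened system on noise $u \sim P_U$, this marginal is exactly $P_M^{\doop(X_i=b)}$ by the definitions of intervention and counterfactual.
\item It assembles these $2n+1$ distributions in the order prescribed by $\Int_1$.
\end{itemize}
Each step is a finite sum of rationals, and the number of summands ($2^{2n}$ per entry) is determined by $n$. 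Reducing the results to lowest terms is also computable. Hence $g_{23}$ is computable given $n$, and $\DL_2(M) \leq \DL_3(M) + O(1)$.

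\textbf{Main obstacle.} The only real point to check is the semantic identity used in the second step: the marginal of the counterfactual coordinate $X^{(i\leftarrow b)}$ in $\CF_1$ must coincide with the interventional distribution. This holds directly from the definitions, because both are the pushforward of $P_U$ through the modified structural equations. Beyond that, the remaining work is bookkeeping. We must make sure that the $O(1)$ constants are uniform in $M$ and $n$, which they are because $g_{12}$ and $g_{23}$ are single fixed programs that take $n$ as input. Chaining the two inequalities gives the statement.
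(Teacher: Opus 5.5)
Your proposal is correct and takes the same approach as the paper: $\Obs(M)$ is a component of $\Int_1(M)$ and $\Int_1(M)$ is a computable marginal of $\CF_1(M)$, so both inequalities follow because a fixed computable map increases conditional complexity by at most $O(1)$. Your write-up spells out what the paper's one-line proof leaves implicit, namely the explicit marginalization and the identification of the counterfactual marginal with $P_M^{\doop(X_i=b)}$.
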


\begin{proof}[Proof] Direct from Definition~\ref{def:query-families}: $\Obs(M)$ is a component of $\Int_1(M)$, and $\Int_1(M)$ is a marginal of $\CF_1(M)$. Both reductions are computable.\end{proof}

Our constructions show these inequalities can be far from tight: the rest of the paper exhibits explicit families with large gaps.

\begin{lemma}[Conditioning on a constant-complexity object is free]
\label{lem:conditioning-simple}
If $K(\Obs(M) \mid n) = O(1)$, then $K(\Int_1(M) \mid \Obs(M), n) = K(\Int_1(M) \mid n) \pm O(1)$, and consequently $\Delta_{2|1}(M) = \DL_2(M) \pm O(1)$.
\end{lemma}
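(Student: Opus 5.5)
We prove two inequalities, each up to an additive $O(1)$, using only the hypothesis $K(\Obs(M)\mid n)=c$ with $c=O(1)$ and the fact that $\Obs(M)$ is a computable component of $\Int_1(M)$ (Definition~\ref{def:query-families}).

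\emph{Upper bound:} $K(\Int_1(M)\mid \Obs(M),n)\le K(\Int_1(M)\mid n)+O(1)$. Take a shortest program $p$ that outputs $\Int_1(M)$ from $n$. Wrap it in a fixed interpreter that ignores the auxiliary input $\Obs(M)$ and runs $p$ on $n$. This adds only a constant and gives the bound.

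\emph{Lower bound:} $K(\Int_1(M)\mid n)\le K(\Int_1(M)\mid \Obs(M),n)+K(\Obs(M)\mid n)+O(1)$. Let $q_0$ be a shortest program computing $\Obs(M)$ from $n$, of length $c$, and let $q$ be a shortest program computing $\Int_1(M)$ from $(\Obs(M),n)$. A fixed decoder reads $q_0$ from the input and runs it on $n$ to recover $\Obs(M)$. It then reads $q$ and runs it on $(\Obs(M),n)$.

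Because the machine is prefix-free, the decoder can tell where $q_0$ ends and $q$ begins, so no length header or $O(\log)$ separator is needed. The combined program therefore has length $|q_0|+|q|+O(1)$. This is the easy direction of chain-rule/subadditivity, and since $c=O(1)$ it yields $K(\Int_1(M)\mid n)\le K(\Int_1(M)\mid\Obs(M),n)+O(1)$.

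Combining the two bounds gives $K(\Int_1(M)\mid\Obs(M),n)=K(\Int_1(M)\mid n)\pm O(1)$. By definition the left side is $\Delta_{2|1}(M)$ and the right side is $\DL_2(M)$, so $\Delta_{2|1}(M)=\DL_2(M)\pm O(1)$.

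The only delicate step is the concatenation argument. Here prefix-freeness is what keeps the overhead at $O(1)$ rather than $O(\log K(\Int_1(M)\mid n))$. Note that the resulting constant depends on $c$, which is uniformly bounded in the families where this lemma is applied.
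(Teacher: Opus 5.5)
Your proof is correct and matches the paper's argument. The upper bound comes from ignoring the auxiliary input $\Obs(M)$. The lower bound comes from concatenating the constant-length program for $\Obs(M)$ with a program for $\Int_1(M)$ given $(\Obs(M),n)$. Your added points, that prefix-freeness keeps the concatenation overhead at $O(1)$ and that the constant depends on the uniform bound $c$, are correct details the paper leaves implicit.
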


\begin{proof}[Proof]
The upper bound $K(\Int_1 \mid \Obs, n) \leq K(\Int_1 \mid n) + O(1)$ holds by ignoring the auxiliary input. For the lower bound, concatenate the constant-length program that outputs $\Obs(M)$ from $n$ with a program that outputs $\Int_1(M)$ from $(\Obs(M), n)$: this yields $K(\Int_1 \mid n) \leq K(\Int_1 \mid \Obs, n) + O(1)$.
\end{proof}

In all our constructions $\DL_1 = O(1)$, so by Lemma~\ref{lem:conditioning-simple} the gap $\Delta_{2|1}$ equals $\DL_2 \pm O(1)$. Chain-rule overhead is $O(\log n)$ when conditioning on a non-trivial object; we omit it from theorem statements except where it affects the leading order.

%==============================================================================
\section{Warm-Up: The Rooted-Tree Family}
\label{sec:warmup}
%==============================================================================

For intuition, structure $n$ binary variables as a hidden rooted tree $T$ on $[n]$ with $X_r := U_r \sim \mathrm{Bernoulli}(1/2)$ and $X_v := X_{\pa_T(v)}$ for $v \neq r$. All trees produce the same observational distribution $P^\star$ on $\{0^n, 1^n\}$, so $\DL_1 = O(1)$, but $\doop(X_i = 0)$ deterministically forces every descendant of $i$ to $0$ (non-descendants remain $\sim \mathrm{Bernoulli}(1/2)$), so the $n$ single-node interventions decode the descendant sets and hence $T$.

\begin{theorem}[Rooted-tree separation]
\label{thm:tree-main}
Let $\calM_{\mathrm{tree}}^n$ be the family of rooted-tree SCMs $M_T$ above. All members share $\Obs(M_T) = P^\star$ with $\DL_1(M_T) = O(1)$. Moreover:
\begin{enumerate}
    \item \textbf{Upper bound.} For every $T$, $\DL_2(M_T) \leq (n-1)\log_2 n + O(\log n)$.
    \item \textbf{Lower bound.} For uniformly random $T$ on $[n]$ and every $c \geq 0$,
    \[
        \Pr\!\bigl[\Delta_{2|1}(M_T) \geq (n-1)\log_2 n - c - O(\log n)\bigr] \;\geq\; 1 - 2^{-c}.
    \]
\end{enumerate}
\end{theorem}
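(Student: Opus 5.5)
We first dispose of the observational claims. For any rooted tree $T$, every non-root variable copies its parent, so by induction along $T$ every $X_v$ equals $X_r$. Hence $P_{M_T}$ puts mass $1/2$ on each of $0^n$ and $1^n$, independently of $T$. This $P^\star$ is produced by a fixed program given $n$, so $\DL_1(M_T)=O(1)$. By Lemma~\ref{lem:conditioning-simple}, $\Delta_{2|1}(M_T)=\DL_2(M_T)\pm O(1)$, so both parts reduce to statements about $\DL_2(M_T)$.

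For the upper bound we encode $T$ directly and let a fixed program compute $\Int_1(M_T)$ from it. The tree is given by its root $r$ ($\lceil\log_2 n\rceil$ bits) and the parent of each of the other $n-1$ vertices. Encoding the parent vector as a single integer in $[0,n^{n-1})$ costs $\lceil (n-1)\log_2 n\rceil$ bits. Prefix-freeness needs only $O(\log n)$ extra, since $n$ is given as the condition and the lengths are computable from $n$. One could instead cite Cayley's formula ($n^{n-2}$ labeled trees times $n$ roots), but the direct parent encoding suffices.

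The decoder simulates each modified SCM. Under $\doop(X_i=x)$, the descendants of $i$ (including $i$) are fixed to $x$. The remaining vertices still copy the root, so they equal a single fair bit if $i\neq r$. If $i=r$, every vertex is fixed. Each of the $2n+1$ distributions is thus written explicitly from $T$ with rational entries. This gives $\DL_2(M_T)\le (n-1)\log_2 n+O(\log n)$.

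For the lower bound we show that $T\mapsto \Int_1(M_T)$ is injective on rooted trees. From $P^{\doop(X_i=0)}$ we read off $D(i)$, the set of coordinates that are deterministically $0$; this is exactly the descendant set of $i$ (including $i$). The root is the unique $i$ with $D(i)=[n]$. The parent of $v\neq r$ is the $u\neq v$ minimizing $|D(u)|$ subject to $v\in D(u)$, because the ancestors of $v$ form a chain with strictly nested descendant sets. So the family yields $N=n\cdot n^{n-2}=n^{n-1}$ distinct strings $\Int_1(M_T)$, one per uniformly random rooted tree.

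Apply Lemma~\ref{lem:counting}(2) with $w=(\Obs,n)$; since $\Obs=P^\star$ is the same for all members, this is a single fixed auxiliary string. All but a $2^{-c}$ fraction of trees then have $\Delta_{2|1}(M_T)\ge (n-1)\log_2 n - c - O(1)$, which is stronger than claimed. The main point to check is the injectivity step, specifically that the minimal strict-ancestor recovery is correct. This holds because descendant sets along an ancestor chain strictly decrease.
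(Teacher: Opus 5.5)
Your proposal is correct and follows essentially the same route as the paper: injectivity of $T \mapsto \Int_1(M_T)$ via the descendant sets read off from $\doop(X_i=0)$, then Cayley's count of $n^{n-1}$ rooted trees with the counting bound for the lower bound, and an explicit tree encoding plus simulation for the upper bound. The differences are cosmetic. You encode the parent vector as a single integer instead of a Pr\"ufer sequence, and you apply Lemma~\ref{lem:counting} directly with side information $(P^\star, n)$, which gives an $O(1)$ loss rather than the paper's $O(\log n)$ transfer.
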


\emph{Proof idea.} The upper bound uses the Pr\"ufer encoding of $T$ ($(n-2)\log_2 n$ bits plus $O(\log n)$ for the root); the lower bound counts: by Cayley's formula there are $n^{n-1}$ rooted labeled trees, and the descendant decoding shows the map $T \mapsto \Int_1(M_T)$ is injective. Full proof in Appendix~\ref{app:rooted-tree}.

The bound is order-tight in the bounded-indegree regime (\S\ref{sec:degree-optimality}); allowing larger indegree raises the gap quadratically.

\section{Quadratic Separation: The Bipartite-Graph Construction}
\label{sec:quadratic}
%==============================================================================

We now reach the headline result. The hidden parameter is a bipartite graph $G \subseteq A \times B$ with $|A| = |B| = m$, encoding $m^2$ bits of structure: a quadratic upgrade over the $O(n \log n)$ bits of a rooted tree.

\begin{definition}[Bipartite-graph SCM]
\label{def:bipartite-scm}
Let $n = 2m + 1$. Partition the variables into a root $r$, layer $A = \{a_1, \dots, a_m\}$, and layer $B = \{b_1, \dots, b_m\}$. For a bipartite graph $G \subseteq A \times B$, the SCM $M_G$ has $U_r \sim \mathrm{Bernoulli}(1/2)$ as the only random exogenous, and structural equations
\[
X_r := U_r, \qquad X_{a_i} := X_r \;\;\text{for all } a_i \in A, \qquad X_{b_j} := X_r \wedge \!\!\bigwedge_{(a_i, b_j) \in G}\!\! X_{a_i} \;\;\text{for all } b_j \in B,
\]
where the empty AND is defined as $1$.
\end{definition}

When $X_r = 1$, layer $A$ is all-ones and every AND-gate in $B$ outputs $1$; when $X_r = 0$, everything is $0$. So observationally every $G$ yields the same distribution on $\{0^n, 1^n\}$. But $\doop(X_{a_i} = 0)$ forces every neighbor $b_j$ of $a_i$ to $0$ deterministically, while non-neighbors remain $\sim \mathrm{Bernoulli}(1/2)$, so the $m$ row-interventions reveal the entire $m \times m$ adjacency matrix, distinguishing all $2^{m^2}$ choices of $G$. Proofs of all lemmas and theorems in this section are in Appendix~\ref{app:bipartite-proofs}.

\begin{lemma}[All Graphs Have the Same Observations]
\label{lem:bipartite-obs}
For every bipartite graph $G \subseteq A \times B$, we have $\Obs(M_G) = P^\star$, where $P^\star$ assigns probability $1/2$ to $0^n$ and $1/2$ to $1^n$.
\end{lemma}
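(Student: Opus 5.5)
The plan is to show that the SCM $M_G$ is deterministic given the single random exogenous $U_r$, so its observational distribution is the pushforward of $\mathrm{Bernoulli}(1/2)$ under the map $u \mapsto X(u)$. It therefore suffices to evaluate the structural equations in topological order ($r$, then $A$, then $B$) for each of the two values $u \in \{0,1\}$. We will check that $u=0$ produces $0^n$ and $u=1$ produces $1^n$, independently of $G$.

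First we verify that $M_G$ is a legitimate acyclic SCM. All edges go $r \to A$, $r \to B$, or $A \to B$, so the ordering $r, a_1,\dots,a_m, b_1,\dots,b_m$ is topological. The noises $U_{a_i}, U_{b_j}$ can be taken as degenerate constants, and the rational-probability requirement holds.

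Next we do the case analysis.
\begin{itemize}
\item \emph{Case $U_r = 0$.} Then $X_r = 0$ and every $X_{a_i} = X_r = 0$. Each $X_{b_j} = X_r \wedge (\cdots)$ has the conjunct $X_r = 0$, so $X_{b_j} = 0$ whatever the neighborhood of $b_j$ in $G$ is, including when it is empty.
\item \emph{Case $U_r = 1$.} Then $X_r = 1$ and all $X_{a_i} = 1$. Each $X_{b_j}$ is $1 \wedge \bigwedge_{(a_i,b_j)\in G} 1 = 1$, using the convention that the empty AND is $1$ for isolated $b_j$.
\end{itemize}
Hence $X(0) = 0^n$ and $X(1) = 1^n$, and since $\Pr[U_r=1] = 1/2$, we get $P_{M_G}(0^n) = P_{M_G}(1^n) = 1/2$ with all other configurations having probability $0$. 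This is exactly $P^\star$.

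There is no real obstacle. The only point needing care is the role of $X_r$ in the $B$-gates. The explicit conjunct $X_r$ is what makes the $U_r = 0$ case work for $b_j$ with empty neighborhood, where the empty AND alone would give $1$. It is also what makes $b_j$ depend on $U_r$ later, under interventions on $A$. We will flag this so that the proof does not silently assume every $b_j$ has a neighbor.
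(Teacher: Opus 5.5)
Your proposal is correct and follows the paper's own proof: split on $U_r \in \{0,1\}$, evaluate the equations in the order $r$, then $A$, then $B$, and use the explicit $X_r$ conjunct for $U_r=0$ and the empty-AND convention for $U_r=1$, so that isolated $b_j$ are covered. Your extra check that $r, A, B$ is a topological order is a harmless detail that the paper leaves implicit.
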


\begin{lemma}[Interventions Reveal the Graph]
\label{lem:bipartite-intervention}
For any $a_i \in A$ and the intervention $\doop(X_{a_i} = 0)$:
\[
P_{M_G}^{\doop(X_{a_i} = 0)}(X_{b_j} = 0) = \begin{cases}
1 & \text{if } (a_i, b_j) \in G, \\
1/2 & \text{if } (a_i, b_j) \notin G.
\end{cases}
\]
Thus the neighborhood $N_G(a_i) := \{b_j : (a_i, b_j) \in G\}$ is determined by $P_{M_G}^{\doop(X_{a_i} = 0)}$.
\end{lemma}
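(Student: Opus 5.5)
The plan is a direct computation in the intervened model $M_G^{\doop(X_{a_i}=0)}$. The intervention replaces only the equation for $X_{a_i}$ by the constant $0$. All other equations stay as in Definition~\ref{def:bipartite-scm}: $X_r = U_r \sim \mathrm{Bernoulli}(1/2)$, $X_{a_k} = X_r$ for $k \neq i$, and the AND-gates for $B$ unchanged. Since $U_r$ is the only random exogenous variable, the whole intervened system is a deterministic function of $X_r$. I will therefore split on the value of $X_r$ and read off $X_{b_j}$ in each of the two cases.

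\emph{Case $(a_i,b_j) \in G$.} The conjunction defining $X_{b_j}$ contains the factor $X_{a_i}$, which is now the constant $0$. Hence $X_{b_j} = 0$ for both values of $X_r$, so $P_{M_G}^{\doop(X_{a_i}=0)}(X_{b_j}=0) = 1$.

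\emph{Case $(a_i,b_j) \notin G$.} Every factor in the conjunction is either $X_r$ or some $X_{a_k}$ with $k \neq i$, and each such $X_{a_k}$ equals $X_r$. If the neighborhood of $b_j$ is empty, the empty AND is $1$ and only the factor $X_r$ remains. In either situation the conjunction collapses to $X_{b_j} = X_r$. Therefore $P(X_{b_j}=0) = P(U_r = 0) = 1/2$.

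It is important that $b_j$ always depends on $X_r$ directly, and not only on its $A$-neighbors. This is what keeps isolated and non-adjacent $b_j$ at probability $1/2$ instead of $1$.

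For the ``thus'' clause, the distribution $P_{M_G}^{\doop(X_{a_i}=0)}$ determines each marginal probability $P(X_{b_j}=0)$. Since $1 \neq 1/2$, we recover
\[
N_G(a_i) = \{b_j : P_{M_G}^{\doop(X_{a_i}=0)}(X_{b_j}=0) = 1\},
\]
and this recovery is computable. There is no real obstacle in the argument; the only step needing care is the non-neighbor case, where one must check that no other path forces $b_j$ to $0$. That holds because the other $A$-nodes are unaffected by the intervention on $a_i$: they have no edges among themselves and depend only on $r$.
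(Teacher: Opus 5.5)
Your proposal is correct and takes the same route as the paper's proof: a case split on whether $(a_i,b_j)\in G$. In the edge case the AND gate is forced to $0$; otherwise it collapses to $X_{b_j}=X_r$, including the empty-AND subcase, so $P(X_{b_j}=0)=1/2$. Your explicit recovery of $N_G(a_i)$ as the set of $b_j$ with probability $1$ matches how the paper reads it off in the injectivity argument.
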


\begin{lemma}[The Map $G \mapsto \Int_1(M_G)$ is Injective]
\label{lem:bipartite-injective}
Different bipartite graphs yield different interventional families.
\end{lemma}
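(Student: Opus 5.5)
The plan is to show that $\Int_1(M_G)$ determines $G$, by exhibiting an explicit decoding of the adjacency matrix from the interventional family. Injectivity then follows immediately: if $G \neq G'$, they differ on some pair $(a_i,b_j)$, and the decoded adjacency bits at that pair differ. The engine is Lemma~\ref{lem:bipartite-intervention}, so the work is mostly bookkeeping.

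\textbf{Steps.}
\begin{enumerate}
\item Fix $G \neq G'$ and pick $(a_i,b_j)$ in the symmetric difference; without loss of generality $(a_i,b_j)\in G\setminus G'$.
\item Apply Lemma~\ref{lem:bipartite-intervention} to the component $P_{M_G}^{\doop(X_{a_i}=0)}$. It gives $P_{M_G}^{\doop(X_{a_i}=0)}(X_{b_j}=0)=1$, whereas $P_{M_{G'}}^{\doop(X_{a_i}=0)}(X_{b_j}=0)=1/2$.
\item Conclude that these two interventional distributions differ as distributions, since their $X_{b_j}$-marginals differ. Hence the tuples $\Int_1(M_G)$ and $\Int_1(M_{G'})$ differ in the coordinate indexed by $(a_i,0)$.
\end{enumerate}

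For completeness, the key identity in step~2 is checked directly. Under $\doop(X_{a_i}=0)$, every $a_k$ with $k\neq i$ still equals $X_r$, and $X_{b_j}=X_r\wedge\bigwedge_{(a_k,b_j)\in G}X_{a_k}$.
\begin{itemize}
\item If $a_i$ is a neighbor of $b_j$, the conjunction contains the forced $0$, so $X_{b_j}=0$ always.
\item If $a_i$ is not a neighbor, every conjunct other than $X_r$ equals $X_r$, so $X_{b_j}=X_r\sim\mathrm{Bernoulli}(1/2)$.
\end{itemize}
The empty-AND convention is consistent with this, since the $X_r$ conjunct is always present.

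I expect no real obstacle. The only care point is that $\Int_1$ is encoded as a string of rational probabilities in lowest terms. Distinct distributions therefore give distinct strings, so injectivity at the level of distributions transfers to the encoded objects used later in Lemma~\ref{lem:counting}.
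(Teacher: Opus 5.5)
Your proof is correct and follows essentially the same route as the paper. Both arguments rest on Lemma~\ref{lem:bipartite-intervention}: the paper reads off each neighborhood $N_G(a_i)$ as the set of $b_j$ with $P_{M_G}^{\doop(X_{a_i}=0)}(X_{b_j}=0)=1$, and you phrase the same observation contrapositively via a single edge in the symmetric difference of $G$ and $G'$ (your remark on lowest-terms encodings is a correct detail the paper leaves implicit).
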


\begin{theorem}[Quadratic separation: $\Theta(n^2)$]
\label{thm:quadratic-separation}
Let $n = 2m+1$ and $\calM_{\mathrm{bip}}^n := \{M_G : G \subseteq A \times B\}$ as in Definition~\ref{def:bipartite-scm}. Then:
\begin{enumerate}
    \item \textbf{Upper bound.} For every $G$, $\DL_2(M_G) \leq m^2 + O(1)$.
    \item \textbf{Lower bound.} For $G \sim \mathrm{Unif}(2^{A \times B})$ and every $c \geq 0$,
    \[
        \Pr\bigl[ \Delta_{2|1}(M_G) \geq m^2 - c - O(\log n) \bigr] \geq 1 - 2^{-c}.
    \]
    In particular, taking $c = m^2/2$ shows that a uniformly random graph has $\Delta_{2|1}(M_G) = \Omega(n^2)$ except with probability $\exp(-\Omega(n^2))$.
\end{enumerate}
\end{theorem}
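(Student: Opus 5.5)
The proof splits into the two parts of Theorem~\ref{thm:quadratic-separation}, both built on the injectivity in Lemma~\ref{lem:bipartite-injective} and the observational invariance in Lemma~\ref{lem:bipartite-obs}.

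\textbf{Upper bound.} We describe a fixed program $\pi$ that receives $n$ and the $m^2$-bit adjacency matrix of $G$. From $n$ it computes $m=(n-1)/2$, which tells it how many matrix bits to read; this is what keeps the overhead at $O(1)$ rather than $O(\log n)$. It then builds $M_G$ and computes $P_{M_G}$ and all $2n$ single-node interventional distributions exactly. Each is a distribution on $\{0,1\}^n$ determined by the single fair bit $U_r$, so its support has at most two points with rational masses in $\{0,1/2,1\}$. The program then outputs $\Int_1(M_G)$ in the fixed encoding. Since $\pi$ has constant length, $\DL_2(M_G)\le m^2+O(1)$. A prefix-free concern does not arise, because the number of payload bits is computable from the conditioning input $n$.

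\textbf{Lower bound.} Let $z_G:=\Int_1(M_G)$. By Lemma~\ref{lem:bipartite-injective}, the $N=2^{m^2}$ strings $\{z_G\}$ are distinct. By Lemma~\ref{lem:bipartite-obs}, every member shares $\Obs(M_G)=P^\star$, so the auxiliary string is the same for all $G$, namely $w=(P^\star,n)$. Lemma~\ref{lem:counting}(2) then says that all but a $2^{-c}$ fraction of the $z_G$ satisfy $K(z_G\mid P^\star,n)\ge m^2-c-O(1)$. This is exactly $\Delta_{2|1}(M_G)\ge m^2-c-O(1)$, which is stronger than the stated $O(\log n)$ slack. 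Because $G$ is uniform and the map $G\mapsto z_G$ is a bijection onto this set, the fraction statement becomes the probability bound $\ge 1-2^{-c}$.

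An alternative route goes through Lemma~\ref{lem:conditioning-simple}. Here $K(P^\star\mid n)=O(1)$, since $P^\star$ is computable from $n$, so $\Delta_{2|1}=\DL_2\pm O(1)$, and the counting bound can be applied unconditionally with $w=n$. For the corollary, set $c=m^2/2$: with probability at least $1-2^{-m^2/2}=1-\exp(-\Omega(n^2))$ we get $\Delta_{2|1}\ge m^2/2-O(\log n)=\Omega(n^2)$, since $m=(n-1)/2$.

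The main obstacle is bookkeeping, not mathematics. The key point is that conditioning on $\Obs$ gives no information about $G$, because every $G$ shares $P^\star$. This is what allows a single auxiliary string $w$ in Lemma~\ref{lem:counting}, and it is the fact I will state explicitly. The remaining subtlety is making sure the encoding of distributions is canonical, so that distinct interventional families really are distinct strings. Lemma~\ref{lem:bipartite-injective} gives distinct families as mathematical objects, and the fixed lowest-terms encoding turns these into distinct strings.
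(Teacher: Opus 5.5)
Your proposal is correct and follows the paper's proof. The upper bound is the same $m^2$-bit adjacency-matrix encoding plus a fixed simulator, and the lower bound combines injectivity (Lemma~\ref{lem:bipartite-injective}), observational invariance (Lemma~\ref{lem:bipartite-obs}) and the counting bound (Lemma~\ref{lem:counting}); the one difference is that you apply the counting bound directly with side information $w=(P^\star,n)$, which gives $O(1)$ slack, whereas the paper goes through the unconditional bound and Lemma~\ref{lem:conditioning-simple} and carries an $O(\log n)$ term, and either suffices for the stated claim.
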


\emph{Proof idea.} Upper bound: encode $G$ as an $m \times m$ binary adjacency matrix ($m^2$ bits) and let a fixed program simulate $M_G$. Lower bound: by Lemma~\ref{lem:bipartite-injective} there are $2^{m^2}$ distinct interventional families, all sharing $\Obs = P^\star$ with $\DL_1 = O(1)$, and the conditional counting bound gives the high-probability statement. Full proof in Appendix~\ref{app:bipartite-proofs}.

\subsection{Degree-Sensitive Optimality}
\label{sec:degree-optimality}

The quadratic separation in Theorem~\ref{thm:quadratic-separation} uses layer-$B$ nodes whose indegree can grow with $n$. This is not an artifact: within natural finite-gate-schema SCM classes, the largest possible description gap scales with the number of possible parent choices.

\begin{definition}[Finite-gate-schema bounded-indegree SCM class]
\label{def:finite-gate-class}
A \emph{gate schema} is a computable family $\gamma = (\gamma_k)_{k \geq 0}$ with $\gamma_k : \{0,1\}^k \times \mathcal{U}_\gamma \to \{0,1\}$, specified by a constant-length program independent of $n$ and $k$ (so the schema is defined uniformly across arities). Examples include the constant, copy, negation, parity, and unbounded-AND/OR schemas.

Fix a finite set $\Gamma$ of gate schemas and a finite set $\Pi$ of finite-support rational exogenous noise distributions. Let $\calM_{n,d}(\Gamma, \Pi)$ be the class of binary acyclic SCMs on $n$ endogenous variables such that:
\begin{enumerate}
    \item every endogenous variable has at most $d$ endogenous parents;
    \item each structural equation is obtained by choosing a schema $\gamma \in \Gamma$ and applying its $k$-ary component $\gamma_k$ to the chosen $k \leq d$ parent values and local exogenous noise;
    \item each local exogenous noise distribution is chosen from $\Pi$.
\end{enumerate}
The libraries $\Gamma$ and $\Pi$ are fixed independently of $n$.
\end{definition}

The bipartite-graph SCMs of Definition~\ref{def:bipartite-scm} belong to $\calM_{n,n-1}(\Gamma_{\mathrm{bip}}, \Pi_{\mathrm{bip}})$ for $\Gamma_{\mathrm{bip}} = \{\mathrm{copy}, \mathrm{AND}\}$ and $\Pi_{\mathrm{bip}} = \{\delta_0, \mathrm{Bernoulli}(1/2)\}$, since the AND schema applies uniformly at every arity. The rooted-tree SCMs belong to $\calM_{n,1}(\{\mathrm{copy}\}, \{\delta_0, \mathrm{Bernoulli}(1/2)\})$.

\begin{theorem}[Degree-sensitive upper bound]
\label{thm:degree-upper}
For every fixed finite gate-schema library $\Gamma$ and finite noise library $\Pi$, and every $1 \leq d \leq n-1$, there exists a constant $C_{\Gamma,\Pi}$ such that every $M \in \calM_{n,d}(\Gamma,\Pi)$ satisfies
\[
    \DL_2(M) \;\leq\; C_{\Gamma,\Pi} \cdot n + n \log_2\!\Bigl(\sum_{k=0}^{d} \binom{n-1}{k}\Bigr) + O(n \log n).
\]
In particular,
\[
    \DL_2(M) = O\!\left( n d \log\frac{en}{d} + n \log n \right),
\]
and consequently
\[
    \Delta_{2|1}(M) \;\leq\; O\!\left( n d \log\frac{en}{d} + n \log n \right).
\]
\end{theorem}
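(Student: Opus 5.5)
The plan is to bound $\DL_2(M)$ by exhibiting an explicit two-part code: a fixed program (independent of $n$, depending only on $\Gamma,\Pi$) that, given $n$ and a description of $M$, simulates $M$ and outputs $\Int_1(M)$. Then $\DL_2(M)\le |\text{description of }M| + O(1)$. For the description we first fix a topological order, which we encode as a permutation of $[n]$ using at most $\lceil\log_2 n!\rceil\le n\log_2 n$ bits; the DAG is then recovered from this order plus the parent sets. Relative to this order (or, more simply, directly as subsets of $[n]\setminus\{i\}$), each vertex $i$ specifies: its parent set $S_i$ with $|S_i|\le d$, encoded as an index into the list of all subsets of $[n]\setminus\{i\}$ of size at most $d$, costing $\lceil\log_2\sum_{k=0}^d\binom{n-1}{k}\rceil$ bits; the index of its schema in $\Gamma$ ($\lceil\log_2|\Gamma|\rceil$ bits); and the index of its noise law in $\Pi$ ($\lceil\log_2|\Pi|\rceil$ bits). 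Since $d$ is needed to parse the subset index, we prepend $d$ using $O(\log n)$ bits, and all fields are fixed-length given $(n,d)$, so the concatenation is self-delimiting. Summing gives $C_{\Gamma,\Pi}\, n + n\log_2\sum_{k\le d}\binom{n-1}{k} + O(n\log n)$ with $C_{\Gamma,\Pi}=\lceil\log_2|\Gamma|\rceil+\lceil\log_2|\Pi|\rceil+1$ absorbing the ceilings. The ordering of parents within $S_i$ is fixed canonically (increasing label), which is legitimate because schemas are applied to the parent values in a fixed positional convention that the decoder can reproduce.

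The decoder works as follows. Having the graph, gates and noise laws, it computes a topological order, and for each of the $2n+1$ regimes (no intervention, and $\doop(X_i=b)$) it enumerates all joint exogenous realizations $u\in\prod_i\mathrm{supp}(\pi_i)$, which is a finite set since each $\pi_i\in\Pi$ has finite rational support. For each $u$ it evaluates the structural equations in order, calling the schema program $\gamma$ at arity $|S_i|$, and accumulates the exact rational probability of each output $x\in\{0,1\}^n$. It then writes the $2n+1$ distributions in the canonical lowest-terms encoding. Everything here is computable, though exponential-time, and because the schema programs and the noise-law table are constant-size and independent of $n$ they are hard-wired into the decoder. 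This contributes only $O(1)$ to the length, and running time is irrelevant for $K$.

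For the asymptotic form I would use the standard estimate $\sum_{k=0}^{d}\binom{n-1}{k}\le (e(n-1)/d)^d\le (en/d)^d$ for $1\le d\le n-1$, giving $n\log_2\sum_k\binom{n-1}{k}\le nd\log_2(en/d)$. The $C n$ term is dominated by $n\log n$. Finally, $\Delta_{2|1}(M)=K(\Int_1\mid \Obs,n)\le K(\Int_1\mid n)+O(1)=\DL_2(M)+O(1)$ by ignoring the auxiliary input, exactly as in the easy direction of Lemma~\ref{lem:conditioning-simple}.

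The only delicate step is making sure no hidden $n$-dependent cost sneaks in. One source is the encoding of schema evaluation at arbitrary arity; this is handled by the uniform-across-arities requirement in Definition~\ref{def:finite-gate-class}. The other source is the output encoding of exponentially long distributions, and this is not an issue because $K$ counts program length, not output length. I expect the parent-set indexing and self-delimitation bookkeeping (the $d$ prefix, and whether a topological order needs to be sent at all, since acyclicity lets the decoder recompute it) to be the main place requiring care. It only affects the $O(n\log n)$ term, however, and we could in fact drop the explicit order entirely.
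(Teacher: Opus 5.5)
Your proposal is correct and follows essentially the same route as the paper. Both use a two-part code listing, for each variable, its parent set (indexed among the $\sum_{k\le d}\binom{n-1}{k}$ options) and its gate and noise indices. Both decode with a fixed simulator that outputs $\Int_1(M)$, then apply the binomial-sum estimate and $\Delta_{2|1}(M)\le \DL_2(M)+O(1)$. Your extra bookkeeping only makes explicit, or slightly sharpens, details the paper leaves implicit: the $O(\log n)$-bit prefix for $d$, the canonical parent ordering, the observation that the topological order is redundant, and the bound $(e(n-1)/d)^d$ in place of the paper's $(d+1)(e(n-1)/d)^d$.
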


\emph{Proof idea.} Encode an $M \in \calM_{n,d}(\Gamma,\Pi)$ by a topological order, each variable's parent set, and its gate/noise choices; the binomial-sum inequality $\sum_{k=0}^{d}\binom{n-1}{k} \leq (d+1)(e(n-1)/d)^d$ for $1 \leq d \leq n-1$ gives the asymptotic form. Full proof in Appendix~\ref{app:bipartite-proofs}.

\begin{corollary}[Optimality of the tree and bipartite constructions]
\label{cor:degree-optimality}
Within finite-gate-schema SCM classes: if $d = O(1)$, every observational--interventional gap is at most $O(n \log n)$, achieved by the rooted-tree construction (Theorem~\ref{thm:tree-main}); if $d = \Theta(n)$, the upper bound becomes $O(n^2)$, achieved by the bipartite construction (Theorem~\ref{thm:quadratic-separation}). The transition from $\Theta(n \log n)$ to $\Theta(n^2)$ is exactly the transition from bounded-degree to dense mechanisms.
\end{corollary}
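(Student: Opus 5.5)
The corollary follows by putting three earlier results side by side: the degree-sensitive upper bound (Theorem~\ref{thm:degree-upper}) and the two explicit lower-bound constructions (Theorems~\ref{thm:tree-main} and~\ref{thm:quadratic-separation}). No new construction is needed. The work is to evaluate $O(nd\log(en/d)+n\log n)$ in the two regimes and to check that each witness family lies in a finite-gate-schema class of the right indegree.

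\emph{Bounded-degree regime.} Take $d=O(1)$, say $d\le d_0$ for a constant $d_0$. Then $nd\log(en/d)\le d_0\, n\log(en)=O(n\log n)$, so Theorem~\ref{thm:degree-upper} gives $\Delta_{2|1}(M)=O(n\log n)$ for every $M\in\calM_{n,d}(\Gamma,\Pi)$. For the matching lower bound, we noted after Definition~\ref{def:finite-gate-class} that the rooted-tree SCMs lie in $\calM_{n,1}(\{\mathrm{copy}\},\{\delta_0,\mathrm{Bernoulli}(1/2)\})$. The membership covers the root as well: it is a copy gate of arity $0$ applied to $\mathrm{Bernoulli}(1/2)$ noise, and the non-root nodes take $\delta_0$ noise. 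Since $\calM_{n,1}\subseteq\calM_{n,d}$ for $d\ge1$, Theorem~\ref{thm:tree-main}(2) with, for instance, $c=1$ gives a member with $\Delta_{2|1}\ge (n-1)\log_2 n-O(\log n)=\Omega(n\log n)$. Hence the maximum gap is $\Theta(n\log n)$.

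\emph{Dense regime.} Take $d=\Theta(n)$. Write $d\ge\alpha n$ for a constant $\alpha>0$, and note $d\le n-1$. Then $\log(en/d)\le\log(e/\alpha)=O(1)$, so $nd\log(en/d)=O(n^2)$. The $n\log n$ term is lower order, so the upper bound is $O(n^2)$. For the lower bound, the bipartite SCMs have indegree at most $m+1\le n-1$ (each $b_j$ has the root plus at most $m$ nodes of $A$ as parents). When $d\ge m+1$, which holds once $\alpha>1/2$, they lie in $\calM_{n,d}(\Gamma_{\mathrm{bip}},\Pi_{\mathrm{bip}})$ directly.

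For general $d=\Theta(n)$ with smaller $\alpha$, I would restrict Definition~\ref{def:bipartite-scm} to $A,B$ of size $m'=\min(m,d-1)=\Theta(n)$ and pad the remaining variables as copies of the root. Padding does not change the observational distribution, which stays on $\{0^n,1^n\}$, and Lemmas~\ref{lem:bipartite-intervention}--\ref{lem:bipartite-injective} apply verbatim to the restricted family. It therefore still gives $2^{m'^2}$ distinct $\Int_1$ oracles, and the counting argument of Theorem~\ref{thm:quadratic-separation}(2) yields a member with $\Delta_{2|1}\ge m'^2-O(\log n)=\Omega(n^2)$.

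The main obstacle is only this bookkeeping: checking that the witness families respect the fixed indegree cap $d$ and the fixed libraries $\Gamma,\Pi$. The padding step handles it in the dense case; everything else is direct substitution. The final sentence of the corollary is interpretive, read off from the two regimes.
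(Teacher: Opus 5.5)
Your proposal is correct and follows the paper's route: you substitute the two regimes into Theorem~\ref{thm:degree-upper}, and you realize the rates with the rooted-tree and bipartite families, whose membership in finite-gate-schema classes was noted after Definition~\ref{def:finite-gate-class}. Your padding step (shrinking the bipartite layers to size $\min(m,d-1)$ and making the leftover variables copies of the root) is a useful addition that the paper's one-line proof lacks: the paper only places the bipartite family in $\calM_{n,n-1}$, so it does not directly cover $d=\alpha n$ with $\alpha\le 1/2$, nor the case of even $n$.
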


\begin{proof}
Substitute $d = O(1)$ and $d = \Theta(n)$ into Theorem~\ref{thm:degree-upper}; the rooted-tree and bipartite constructions lie in $\calM_{n,1}$ and $\calM_{n,n-1}$ under the gate-schema libraries identified after Definition~\ref{def:finite-gate-class}, realizing the rates via Theorems~\ref{thm:tree-main} and~\ref{thm:quadratic-separation}.
\end{proof}

\subsection{Finite-Precision Robustness}
\label{sec:finite-precision}

The previous results are stated for exact query answers. We now show that the quadratic separation is not an artifact of exact arithmetic: it persists even when interventional distributions only need to be specified up to constant total-variation error.

\begin{definition}[Approximate query-class description length]
\label{def:approx-dl}
Let $d_Q$ be a metric on answer objects for query class $Q$. For $\varepsilon > 0$, define
\[
    K_\varepsilon\bigl(\Ans_Q(M) \mid w\bigr) := \min\bigl\{ |p| \;:\; d_Q\!\bigl(U(p, w), \Ans_Q(M)\bigr) \leq \varepsilon \bigr\},
\]
where $U$ is the fixed universal machine and $w$ is an auxiliary input. The corresponding $\varepsilon$-approximate description gap is
\[
    \Delta_\varepsilon(Q_2 \mid Q_1; M) := K_\varepsilon\bigl(\Ans_{Q_2}(M) \;\big|\; \Ans_{Q_1}(M), n\bigr).
\]
For interventional answer objects we use the metric
\[
    d_{\Int}(I, I') := \max\!\left\{ \mathrm{TV}(P_I, P_{I'}),\; \max_{i \in [n],\, b \in \{0,1\}} \mathrm{TV}\!\bigl( P^{\doop(X_i = b)}_I,\, P^{\doop(X_i = b)}_{I'} \bigr) \right\},
\]
where $\mathrm{TV}$ denotes total variation distance and $P_I, P_{I'}$ are the observational components of $I, I'$.
\end{definition}

\begin{theorem}[Approximate quadratic separation]
\label{thm:approx-quadratic}
Consider the bipartite family $\calM_{\mathrm{bip}}^n$ with $n = 2m + 1$. For every fixed $\varepsilon < 1/4$:
\begin{enumerate}
    \item \textbf{Upper bound.} For every $G$, $\Delta_\varepsilon(\Int_1 \mid \Obs;\, M_G) \leq m^2 + O(1)$.
    \item \textbf{Lower bound (existence).} There exists $G \subseteq A \times B$ such that
    \[
        \Delta_\varepsilon(\Int_1 \mid \Obs;\, M_G) \;\geq\; m^2 - O(\log n).
    \]
    \item \textbf{Lower bound (high probability).} For uniformly random $G \sim \mathrm{Unif}(2^{A \times B})$ and every $c \geq 0$,
    \[
        \Pr\!\left[\, \Delta_\varepsilon(\Int_1 \mid \Obs;\, M_G) \geq m^2 - c - O(\log n) \,\right] \;\geq\; 1 - 2^{-c}.
    \]
\end{enumerate}
\end{theorem}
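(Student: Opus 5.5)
The plan is to reduce the approximate statement to the exact counting argument behind Theorem~\ref{thm:quadratic-separation}. The key observation is that, for $\varepsilon<1/4$, any answer object within $d_{\Int}$-distance $\varepsilon$ of $\Int_1(M_G)$ still determines $G$ uniquely and computably. Each short approximate program therefore yields a short exact description of $G$, and hence of $\Int_1(M_G)$ by Lemma~\ref{lem:bipartite-injective}.

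\textbf{Upper bound.} This is immediate: $K_\varepsilon \le K_0$. The exact program from Theorem~\ref{thm:quadratic-separation} (the adjacency matrix of $G$ plus a fixed simulator) outputs $\Int_1(M_G)$ exactly, ignoring the auxiliary input $\Obs(M_G)=P^\star$. Its length is $m^2+O(1)$.

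\textbf{Decoding lemma.} Let $I'$ be any answer object with $d_{\Int}(I',\Int_1(M_G))\le\varepsilon$. Consider the marginal event $E_{ij}=\{X_{b_j}=0\}$ under $\doop(X_{a_i}=0)$. By Lemma~\ref{lem:bipartite-intervention}, its true probability is $1$ if $(a_i,b_j)\in G$ and $1/2$ otherwise. Total variation bounds the change in the probability of any event, so $P_{I'}^{\doop(X_{a_i}=0)}(E_{ij})$ lies in $[1-\varepsilon,1]$ in the first case and in $[1/2-\varepsilon,1/2+\varepsilon]$ in the second. These intervals are disjoint because $1/2+\varepsilon<1-\varepsilon$, which requires only $\varepsilon<1/4$. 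A fixed decoder therefore thresholds at $3/4$ and recovers every bit of $G$. I expect $\varepsilon<1/4$ to be exactly the condition under which this separation holds: at $\varepsilon=1/4$ both cases can collapse to $3/4$.

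Two technicalities must be handled here. First, $U(p,w)$ must output something parsable as a finite list of distributions; we can restrict the decoder to run on the output and read off the relevant marginal. Second, halting is not an issue, because $p$ is assumed to halt with the approximating output.

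\textbf{Lower bounds.} Take any program $p$ witnessing $K_\varepsilon(\Int_1(M_G)\mid P^\star,n)$. Append the constant-size decoder, then recompute $\Int_1(M_G)$ exactly from the recovered $G$. This gives
\[
K(\Int_1(M_G)\mid P^\star,n)\le \Delta_\varepsilon(\Int_1\mid\Obs;M_G)+O(1).
\]
Since $P^\star$ is computable from $n$, Lemma~\ref{lem:conditioning-simple} lets us replace the conditioning by $n$ alone at cost $O(1)$. Equivalently, the map $G\mapsto$ decoded string is injective, so we may apply Lemma~\ref{lem:counting} directly to the $2^{m^2}$ distinct strings $\Int_1(M_G)$ with $w=(P^\star,n)$. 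Part~(1) of Lemma~\ref{lem:counting} yields the existence claim. Part~(2) yields the high-probability claim: all but a $2^{-c}$ fraction of $G$ satisfy $K\ge m^2-c-O(1)$, so $\Delta_\varepsilon\ge m^2-c-O(\log n)$, where the $O(\log n)$ slack absorbs the encoding and conditioning overheads.

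The main obstacle is making the decoding step robust to the arbitrary format of $U(p,w)$. I will handle this by interpreting $d_Q$ as $+\infty$ on malformed outputs, so that any witness $p$ necessarily produces a well-formed list of distributions to which the threshold decoder applies.
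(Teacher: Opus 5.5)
Your proposal is correct and follows essentially the paper's route. The $1/2$ gap in $P^{\doop(X_{a_i}=0)}(X_{b_j}=0)$ between graphs differing at $(a_i,b_j)$ makes the $\varepsilon$-neighborhoods of the $2^{m^2}$ answer objects disjoint for $\varepsilon<1/4$, and the conditional counting bound (Lemma~\ref{lem:counting}) with $w=(P^\star,n)$ then gives both lower bounds, while the upper bound comes from the exact encoding. Your threshold-at-$3/4$ decoder is a constructive version of the paper's disjoint-balls step, and it has the mild benefit of making $K(\Int_1(M_G)\mid P^\star,n)\le \Delta_\varepsilon(\Int_1\mid\Obs;M_G)+O(1)$ explicit across the whole family.
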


\emph{Proof idea.} For the upper bound, an exact $m^2$-bit encoding of $G$ already lets a fixed program output $\Int_1(M_G)$ exactly, hence $\varepsilon$-approximately. For the lower bound, distinct $G$ differ on some edge $(a_i, b_j)$, and $\doop(X_{a_i}{=}0)$ shifts the marginal of $X_{b_j}$ by exactly $1/2$, so the $\varepsilon$-balls around the $2^{m^2}$ answer oracles are pairwise disjoint for $\varepsilon < 1/4$; the conditional counting bound applies. Full proof in Appendix~\ref{app:bipartite-proofs}.

%==============================================================================
\section{Counterfactual Separation: The Modular-XOR Construction}
\label{sec:counterfactual}
%==============================================================================

The hierarchy continues: even the \emph{full hard-do interventional oracle} (the joint specification of all hard atomic do-distributions on endogenous variables) can leave counterfactual queries underdetermined. The construction stacks $m$ copies of the simplest 2-mechanism ambiguity: on a pair $(X, Y)$, the \emph{no-effect} mechanism ($X, Y$ independent uniform) and the \emph{XOR} mechanism ($X$ uniform, $Y = X \oplus U_Y$ with independent $U_Y$ uniform) are both uniform on $\{0,1\}^2$ and behave identically under every intervention, but disagree on every counterfactual that fixes the noise: under no-effect $Y^{\doop(X=0)} = Y^{\doop(X=1)}$ on the same $U_Y$, while under XOR they always differ.

\begin{definition}[Modular-XOR SCM]
\label{def:xor-scm}
Let $n = 2m$. For a hidden string $s \in \{0,1\}^m$, the SCM $M_s$ has $m$ independent modules $(X_t, Y_t)$ with iid exogenous $U_{X_t}, U_{Y_t} \sim \mathrm{Bernoulli}(1/2)$ and
\[
X_t := U_{X_t}, \qquad Y_t := \begin{cases} U_{Y_t} & \text{if } s_t = 0, \\ X_t \oplus U_{Y_t} & \text{if } s_t = 1. \end{cases}
\]
\end{definition}

Proofs of all lemmas and theorems in this section are in Appendix~\ref{app:xor-proofs}.

\begin{lemma}[Observational Equivalence]
\label{lem:xor-obs}
For all $s \in \{0,1\}^m$, the observational distribution of $M_s$ is uniform on $\{0,1\}^{2m}$.
\end{lemma}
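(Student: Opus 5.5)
We will show that for every $x \in \{0,1\}^m$ and $y \in \{0,1\}^m$, the observational probability is
\[
P_{M_s}\bigl(X_1 = x_1, Y_1 = y_1, \ldots, X_m = x_m, Y_m = y_m\bigr) = 2^{-2m}.
\]
The argument has two steps. First we factor the joint distribution over the modules. Then we show each module's pair $(X_t, Y_t)$ is uniform on $\{0,1\}^2$ under both choices of $s_t$.

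For the factorization, note that by Definition~\ref{def:xor-scm} the pair $(X_t, Y_t)$ is a deterministic function of $(U_{X_t}, U_{Y_t})$ alone. The function depends only on the fixed bit $s_t$, and no structural equation in module $t$ references a variable from another module. The $2m$ exogenous variables are mutually independent, so the $m$ pairs $(U_{X_t}, U_{Y_t})$ are independent. Functions of independent blocks are independent, hence
\[
P_{M_s}(x, y) = \prod_{t=1}^m \Pr\bigl[X_t = x_t, Y_t = y_t\bigr].
\]

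For a single module, consider the map $\phi_{s_t} : (u_X, u_Y) \mapsto (X_t, Y_t)$.
\begin{itemize}
\item If $s_t = 0$, then $\phi_0(u_X, u_Y) = (u_X, u_Y)$, which is the identity.
\item If $s_t = 1$, then $\phi_1(u_X, u_Y) = (u_X, u_X \oplus u_Y)$. This map is a bijection of $\{0,1\}^2$, since it is its own inverse: $u_Y = u_X \oplus (u_X \oplus u_Y)$.
\end{itemize}
A bijection pushes the uniform distribution on $\{0,1\}^2$ forward to the uniform distribution. So in both cases each of the four values of $(X_t, Y_t)$ has probability $1/4$. Multiplying over the $m$ modules gives $4^{-m} = 2^{-2m}$ for every $(x, y)$, which is the uniform distribution on $\{0,1\}^{2m}$, independent of $s$.

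There is no real obstacle here. The only point that needs care is checking that the map is a bijection in the XOR case. That is what makes the no-effect and XOR mechanisms observationally identical, and the same bijection argument will later support the interventional-equivalence lemma.
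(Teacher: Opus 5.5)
Your proposal is correct and follows essentially the paper's proof: factor over the independent modules, then show each pair $(X_t, Y_t)$ is uniform on $\{0,1\}^2$ for both values of $s_t$. The only difference is cosmetic. You handle the XOR case through the self-inverse bijection $(u_X, u_Y) \mapsto (u_X, u_X \oplus u_Y)$, whereas the paper computes $\Prob(Y_t = y \mid X_t = x) = \Prob(U_{Y_t} = x \oplus y) = 1/2$ directly.
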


\begin{lemma}[Interventional Equivalence]
\label{lem:xor-int}
For all $s \in \{0,1\}^m$, the interventional family $\Int_1(M_s)$ is identical.
\end{lemma}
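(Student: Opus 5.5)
We will show that every component of $\Int_1(M_s)$, namely $P_{M_s}$ and $P_{M_s}^{\doop(V=b)}$ for each endogenous $V \in \{X_t, Y_t : t \in [m]\}$ and $b \in \{0,1\}$, equals the distribution that is uniform on the non-intervened coordinates and places the intervened coordinate at $b$. This distribution does not depend on $s$. The observational component is already handled by Lemma~\ref{lem:xor-obs}. The argument uses two structural facts. First, the $m$ modules share no variables and have mutually independent exogenous noise, so under any single-node intervention the joint distribution factorizes as a product over modules. Second, an intervention on a variable in module $t$ modifies only the equations of module $t$. It therefore suffices to fix $t$ and verify the claim module by module.

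Fix an intervention $\doop(V=b)$. For modules $t' \neq t$, the structural equations are unchanged, so $(X_{t'},Y_{t'})$ keeps its observational law. By the per-module computation underlying Lemma~\ref{lem:xor-obs}, this law is uniform on $\{0,1\}^2$. For module $t$ we split into two cases.

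\emph{Case $V = Y_t$.} Here $Y_t := b$ and $X_t = U_{X_t}$ is uniform, whatever the value of $s_t$.

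\emph{Case $V = X_t$.} Here $X_t := b$. If $s_t = 0$, then $Y_t = U_{Y_t}$ is uniform. If $s_t = 1$, then $Y_t = b \oplus U_{Y_t}$. Because $x \mapsto b \oplus x$ is a bijection on $\{0,1\}$ and $U_{Y_t} \sim \mathrm{Bernoulli}(1/2)$, this is again uniform. In both subcases the module $t$ law is $\delta_b \otimes \mathrm{Unif}\{0,1\}$.

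Taking the product over modules shows that $P_{M_s}^{\doop(V=b)}$ is the same distribution for every $s$. Hence all $2n+1$ components of $\Int_1(M_s)$ agree across $s$, and so do their canonical encodings.

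The only step requiring care is the factorization claim under intervention. It holds because the SCM graph is a disjoint union of the edges $X_t \to Y_t$ (present only when $s_t = 1$), and an atomic intervention replaces one equation with a constant. The post-intervention system is therefore still a collection of independent modules driven by disjoint independent noise. I do not expect a real obstacle. The key computation is the XOR-with-uniform-noise bijection, which is the single point where $s_t = 1$ could in principle differ from $s_t = 0$.
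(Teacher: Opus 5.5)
Your proof is correct and follows essentially the same route as the paper: a case split on whether the single-node intervention targets $X_t$ or $Y_t$, using that $b \oplus U_{Y_t}$ is uniform when $s_t = 1$, with all other modules unaffected. Your version is somewhat more explicit than the paper's, since you invoke Lemma~\ref{lem:xor-obs} for the observational component and justify the product factorization across modules, both of which the paper leaves implicit.
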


\begin{lemma}[Counterfactual Encoding]
\label{lem:xor-cf}
The map $s \mapsto \CF_1(M_s)$ is injective.
\end{lemma}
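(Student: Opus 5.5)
To prove Lemma~\ref{lem:xor-cf}, we show that each bit $s_t$ can be read off from a single counterfactual marginal inside $\CF_1(M_s)$. Two distinct strings $s \neq s'$ differ in some coordinate $t$. If the component $P(X, X^{(i\leftarrow 0)}, X^{(i\leftarrow 1)})$ for $i = X_t$ takes different values under $s_t = 0$ and $s_t = 1$, then $\CF_1(M_s) \neq \CF_1(M_{s'})$, and injectivity follows.

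Fix a module $t$ and the intervened variable $X_t$. For an exogenous realization $u$, the counterfactual world $X^{(X_t \leftarrow b)}(u)$ leaves every module $t' \neq t$ unchanged, because modules share neither parents nor noise. Inside module $t$ it sets $X_t = b$ and recomputes $Y_t$ from the same $u_{Y_t}$. We therefore compute the joint law of the pair $\bigl(Y_t^{(X_t \leftarrow 0)}, Y_t^{(X_t \leftarrow 1)}\bigr)$, which is a marginal of the $t$-th component of $\CF_1(M_s)$.

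If $s_t = 0$, then $Y_t^{(X_t\leftarrow b)} = U_{Y_t}$ for both values of $b$, so the two coordinates agree with probability $1$. If $s_t = 1$, then $Y_t^{(X_t\leftarrow 0)} = U_{Y_t}$ and $Y_t^{(X_t\leftarrow 1)} = 1 \oplus U_{Y_t}$, so the two coordinates disagree with probability $1$. The statistic
\[
\Pr\bigl[Y_t^{(X_t\leftarrow 0)} \neq Y_t^{(X_t\leftarrow 1)}\bigr]
\]
is a computable function of the component for $i = X_t$, and it equals $s_t$. Hence $s$ is recovered coordinatewise from $\CF_1(M_s)$: there is a computable decoder $D$ with $D(\CF_1(M_s)) = s$, which is stronger than injectivity.

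There is no real obstacle. The only point needing care is the definition of the counterfactual: the same noise $u_{Y_t}$ must be reused in both worlds. That reuse is what separates these models, whereas Lemmas~\ref{lem:xor-obs} and~\ref{lem:xor-int} show that the marginal laws of each world cannot tell them apart.
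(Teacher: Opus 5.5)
Your proof is correct and essentially identical to the paper's: both read off each $s_t$ from whether $Y_t^{(X_t \leftarrow 0)}$ and $Y_t^{(X_t \leftarrow 1)}$, computed on the same noise $U_{Y_t}$, agree with probability $1$ ($s_t = 0$) or $0$ ($s_t = 1$). That statistic is a marginal of the component of $\CF_1(M_s)$ for the variable $X_t$. Your remark that this yields a computable decoder is only a slightly stronger phrasing of the paper's ``the $m$ queries together determine $s$.''
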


Lemmas~\ref{lem:xor-obs}--\ref{lem:xor-cf} already give a $\Theta(n)$ gap conditional on $\Int_1$. The next definition and lemma upgrade $\Int_1$ to the \emph{full hard-do interventional oracle}, so the counterfactual gap survives even complete knowledge of all hard atomic do-distributions.

\begin{definition}[All finite interventions]
\label{def:int-all}
For an SCM $M$ on variables $X_1, \ldots, X_n$, define
\[
    \Int_{\mathrm{all}}(M) \;:=\; \bigl( P_M^{\doop(X_S = x_S)} \bigr)_{S \subseteq [n],\, x_S \in \{0,1\}^{S}},
\]
the collection of post-interventional distributions under all finite atomic interventions. Clearly $\Int_1(M)$ is computable from $\Int_{\mathrm{all}}(M)$.
\end{definition}

\begin{lemma}[Modular-XOR is indistinguishable under all interventions]
\label{lem:xor-all-int}
For all $s, s' \in \{0,1\}^m$, $\Int_{\mathrm{all}}(M_s) = \Int_{\mathrm{all}}(M_{s'})$.
\end{lemma}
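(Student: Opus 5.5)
We will show that for every $S \subseteq [n]$ and every $x_S \in \{0,1\}^S$, the post-interventional distribution $P_{M_s}^{\doop(X_S = x_S)}$ is the same explicit product distribution for all $s$: intervened coordinates are fixed at $x_S$, and every non-intervened coordinate is an independent uniform bit. The starting point is module independence. The exogenous variables of different modules are independent, and no structural equation refers to another module. A hard intervention only replaces equations by constants. Hence under any $\doop(X_S = x_S)$ the joint distribution factorizes across modules, and it suffices to analyze a single module $(X_t, Y_t)$ under each of its four possible restrictions.

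We then do a four-case check for one module, with intervened set $S_t = S \cap \{X_t, Y_t\}$.

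If $S_t = \emptyset$, then $X_t = U_{X_t}$ is uniform. Also $Y_t$ is either $U_{Y_t}$ or $X_t \oplus U_{Y_t}$; with $U_{Y_t}$ uniform and independent of $X_t$, in both cases $(X_t, Y_t)$ is uniform on $\{0,1\}^2$. This is Lemma~\ref{lem:xor-obs} restricted to the module.

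If only $X_t$ is intervened to $x$, then $Y_t$ equals $U_{Y_t}$ or $x \oplus U_{Y_t}$, which is uniform in both cases. So the module law is $\delta_x \otimes \mathrm{Bernoulli}(1/2)$, independent of $s_t$.

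If only $Y_t$ is intervened to $y$, then $X_t = U_{X_t}$ is still uniform. The module law is $\mathrm{Bernoulli}(1/2) \otimes \delta_y$.

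If both are intervened, the module law is a point mass.

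None of these cases depends on $s_t$. Taking products over modules gives $P_{M_s}^{\doop(X_S = x_S)} = P_{M_{s'}}^{\doop(X_S = x_S)}$, so $\Int_{\mathrm{all}}(M_s) = \Int_{\mathrm{all}}(M_{s'})$ as an entire indexed family.

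The key fact is that XOR with an independent uniform bit preserves uniformity and independence from $X_t$. The only step needing care is the factorization claim: that the mutilated SCM still has independent modules. This holds because intervention deletes edges and never adds cross-module dependence. Since these are elementary observations, I do not expect a genuine obstacle.
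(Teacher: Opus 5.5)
Your proposal is correct and follows essentially the same route as the paper. Both arguments reduce to a single module $(X_t, Y_t)$ using independence across modules, then check each pattern of intervention on $\{X_t, Y_t\}$, using that XOR with an independent uniform bit preserves uniformity. You are slightly more explicit than the paper in naming the common post-interventional law (point masses on intervened coordinates, independent uniform bits elsewhere) and in justifying why the mutilated SCM still factorizes across modules.
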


Here ``full hard-do interventional oracle'' means $\Int_{\mathrm{all}}(M)$ as in Definition~\ref{def:int-all}: all hard atomic do-interventions on endogenous variables; it excludes soft, stochastic, edge, and exogenous interventions.

\begin{theorem}[Counterfactual gap after the full hard-do interventional oracle]
\label{thm:cf-separation}
Let $n = 2m$ and $\calM_{\mathrm{xor}}^n := \{M_s : s \in \{0,1\}^m\}$ as in Definition~\ref{def:xor-scm}. Then:
\begin{enumerate}
    \item $\DL(\Int_{\mathrm{all}}; M_s) = O(1)$ for all $s$.
    \item \textbf{Upper bound.} For every $s$, $\DL_3(M_s) \leq m + O(1)$.
    \item \textbf{Lower bound.} For $s \sim \mathrm{Unif}(\{0,1\}^m)$ and every $c \geq 0$,
    \[
        \Pr\!\left[ K\!\left(\CF_1(M_s) \mid \Int_{\mathrm{all}}(M_s), n\right) \geq m - c - O(1) \right] \geq 1 - 2^{-c}.
    \]
\end{enumerate}
Thus even the full hard-do interventional oracle can leave a $\Theta(n)$ counterfactual description gap.
\end{theorem}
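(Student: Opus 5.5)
The plan is to prove the three parts in order, relying almost entirely on Lemmas~\ref{lem:xor-obs}--\ref{lem:xor-all-int} and the counting bound (Lemma~\ref{lem:counting}).

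For part~1, Lemma~\ref{lem:xor-all-int} gives $\Int_{\mathrm{all}}(M_s) = \Int_{\mathrm{all}}(M_{0^m})$ for every $s$. So I will write a constant-length program that, on input $n = 2m$, simulates the no-effect model $M_{0^m}$. That model has all $2m$ variables independent uniform. The program enumerates every $S \subseteq [n]$ and every $x_S$, and outputs the distribution in which coordinates in $S$ are fixed to $x_S$ and the others are iid uniform. Its length does not depend on $n$, so $\DL(\Int_{\mathrm{all}}; M_s) = O(1)$.

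For part~2, the program receives the $m$-bit string $s$ and computes $\CF_1(M_s)$ exactly. It enumerates all $2^{2m}$ noise realizations $u$, each with probability $2^{-2m}$. For each index $i$ it evaluates the actual world and the two worlds under $\doop(X_i=b)$, and tallies the joint distribution as rationals. Because $n$ is given, $s$ can be supplied as a plain string of known length $m = n/2$, so no self-delimiting overhead is needed. This gives $\DL_3(M_s) \le m + O(1)$.

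For part~3, Lemma~\ref{lem:xor-cf} makes the $2^m$ objects $\CF_1(M_s)$ pairwise distinct. By Lemma~\ref{lem:xor-all-int} the conditioning string $w := (\Int_{\mathrm{all}}(M_s), n)$ is the same for all $s$, which is the one point needing care. Because $w$ is a single fixed auxiliary string, Lemma~\ref{lem:counting}(2) applies to the family $\{\CF_1(M_s)\}_s$ with $N = 2^m$. It shows that all but a $2^{-c}$ fraction of $s$ satisfy $K(\CF_1(M_s) \mid w) \ge m - c - O(1)$. Since $s$ is uniform and the map $s \mapsto \CF_1(M_s)$ is injective, this fraction is exactly the desired probability bound, with no $O(\log n)$ chain-rule loss.

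I expect the only real work to lie in the cited lemmas rather than here. The key content is Lemma~\ref{lem:xor-all-int}: under any hard intervention on a set $S$, each module's $Y_t$ is either fixed or equals $U_{Y_t}$ or $x \oplus U_{Y_t}$, and in every case it is uniform and independent of everything else. The other input is Lemma~\ref{lem:xor-cf}: for each $t$, intervening on $X_t$ gives $\Pr[Y_t^{(X_t\leftarrow 0)} \ne Y_t^{(X_t \leftarrow 1)}] = s_t$, so $s_t$ can be read off $\CF_1$. Assuming these lemmas, the theorem follows directly. Finally, since $n = 2m$, the two bounds together give a $\Theta(n)$ gap.
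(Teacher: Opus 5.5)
Your proposal is correct and follows the paper's proof step for step. Part~1 uses the $s$-independence of $\Int_{\mathrm{all}}(M_s)$ (Lemma~\ref{lem:xor-all-int}) plus a constant-length program given $n$; Part~2 simulates $M_s$ from the $m$-bit string $s$; and Part~3 combines injectivity of $s \mapsto \CF_1(M_s)$ (Lemma~\ref{lem:xor-cf}) with Lemma~\ref{lem:counting}(2), applied to the single fixed auxiliary string $w = (\Int_{\mathrm{all}}(M_s), n)$. Your extra details (writing out the product-uniform post-intervention distributions, and reading exactly $m = n/2$ bits of $s$ so no self-delimiting overhead is needed) only make explicit what the paper leaves implicit.
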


\emph{Proof idea.} By Lemma~\ref{lem:xor-all-int}, $\Int_{\mathrm{all}}(M_s)$ does not depend on $s$, so conditioning on it is free. By Lemma~\ref{lem:xor-cf}, the map $s \mapsto \CF_1(M_s)$ is injective on $\{0,1\}^m$, giving $2^m$ distinct counterfactual answer objects. The conditional counting bound then yields the lower bound; an $m$-bit encoding of $s$ gives the upper bound. Full proof in Appendix~\ref{app:xor-proofs}.

Since $\Int_1$ is computable from $\Int_{\mathrm{all}}$, Theorem~\ref{thm:cf-separation} implies the single-node statement $\Delta_{3|2}(M_s) = \Theta(n)$ as a corollary.

%==============================================================================
\section{Ambiguity-to-Bits: A General Principle}
\label{sec:general}
%==============================================================================

The three constructions instantiate a single template: \emph{hide a parameter inside the SCM that lower-rung queries cannot resolve but higher-rung queries can; the gap is at least the log of the number of distinct higher-rung answer oracles consistent with the lower-rung answer, and in our constructions this lower bound is matched up to lower-order terms}. We formalize this in a Kolmogorov form (worst-case) and a Shannon form (average-case under any prior), stated for arbitrary query classes $Q_1, Q_2$.

\begin{definition}[General ambiguity class]
\label{def:gen-ambiguity}
For query classes $Q_1, Q_2$, a finite SCM family $\calM$, and a lower-rung answer $a$, define
\[
    \calF_{Q_2 \mid Q_1}(a; \calM) := \{\Ans_{Q_2}(M) : M \in \calM,\, \Ans_{Q_1}(M) = a\}.
\]
\end{definition}

\begin{theorem}[Ambiguity-to-bits, Kolmogorov form]
\label{thm:ambiguity-bits}
For any finite family $\calM$, query classes $Q_1, Q_2$, and lower-rung answer $a$,
\[
    \max_{M \in \calM:\, \Ans_{Q_1}(M) = a} K\!\bigl(\Ans_{Q_2}(M) \mid a, n\bigr) \;\geq\; \log_2 |\calF_{Q_2 \mid Q_1}(a; \calM)| - O(1).
\]
Moreover, for every $c \geq 0$, all but a $2^{-c}$ fraction of the distinct strings in $\calF_{Q_2 \mid Q_1}(a; \calM)$ have conditional complexity at least $\log_2 |\calF_{Q_2 \mid Q_1}(a; \calM)| - c - O(1)$.
\end{theorem}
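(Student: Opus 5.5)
The plan is to reduce the statement directly to the counting bound, Lemma~\ref{lem:counting}, applied with auxiliary string $w = (a, n)$. Set $N := |\calF_{Q_2 \mid Q_1}(a; \calM)|$. Since $\calM$ is finite, $N$ is finite; if $N = 0$ there is no $M$ with $\Ans_{Q_1}(M) = a$ and the statement is vacuous (or we adopt the convention that the max over an empty set is $-\infty$ and $\log_2 0 = -\infty$). Assume $N \geq 1$ and enumerate the distinct answer strings in $\calF_{Q_2\mid Q_1}(a;\calM)$ as $z_1, \ldots, z_N$. Because $\Ans_{Q_2}$ is a computable finite-string encoding, each $z_k$ is a bona fide finite string, and the $z_k$ are pairwise distinct by construction of the set.

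For the first claim, apply Lemma~\ref{lem:counting}(1) with $w = (a,n)$, encoded as a single string through a fixed computable pairing. This gives some index $k$ with $K(z_k \mid a, n) \geq \log_2 N - O(1)$. By definition of $\calF_{Q_2\mid Q_1}(a;\calM)$, the string $z_k$ equals $\Ans_{Q_2}(M_k)$ for some $M_k \in \calM$ with $\Ans_{Q_1}(M_k) = a$. The maximum on the left-hand side is therefore at least $K(\Ans_{Q_2}(M_k) \mid a, n) \geq \log_2 N - O(1)$.

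For the ``moreover'' claim, apply Lemma~\ref{lem:counting}(2) to the same list $z_1,\ldots,z_N$ with the same $w$. It states that all but at most a $2^{-c}$ fraction of the $z_k$ satisfy $K(z_k \mid a,n) \geq \log_2 N - c - O(1)$, which is exactly the assertion about distinct strings in the ambiguity class.

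The only point needing care, and the one I expect to be the obstacle, is that the $O(1)$ must be uniform: independent of $a$, $n$, $\calM$, $Q_1$, $Q_2$, and $c$. I would check this in the counting argument. It only uses that, for any fixed auxiliary input, fewer than $2^k$ prefix-free programs have length below $k$. That bound does not depend on the auxiliary input, so the pairing of $(a,n)$ costs nothing beyond a fixed constant from the choice of pairing function. I would also note explicitly that the fraction statement is about \emph{distinct strings}, not about models in $\calM$. Several models may collapse to the same answer oracle, and translating the bound into a probability over models would require the prior to induce the uniform distribution on $\calF_{Q_2\mid Q_1}(a;\calM)$, as happens in the injective constructions of Lemmas~\ref{lem:bipartite-injective} and~\ref{lem:xor-cf}.
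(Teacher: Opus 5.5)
Your proposal is correct and matches the paper's proof: both treat $\calF_{Q_2\mid Q_1}(a;\calM)$ as a set of distinct finite strings and apply Lemma~\ref{lem:counting} with auxiliary input $w=(a,n)$, and you obtain the max bound by pulling a high-complexity string back to a model in the level set. Your extra remarks are sound refinements that the paper leaves implicit: the empty case, the uniformity of the $O(1)$ in $a$, $n$, $\calM$ and $c$, and the point that the fraction statement concerns distinct strings rather than models.
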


\begin{proof}
The set $\calF_{Q_2 \mid Q_1}(a; \calM)$ is by construction a set of distinct finite strings. Apply the counting bound (Lemma~\ref{lem:counting}) with auxiliary input $w = (a, n)$.
\end{proof}

\begin{theorem}[Ambiguity-to-entropy, Shannon form]
\label{thm:ambiguity-entropy}
Let $\Theta$ be a random hidden parameter, $M_\Theta$ the corresponding random SCM, and write $L(\Theta) := \Ans_{Q_1}(M_\Theta)$ for the lower-rung answer and $U(\Theta) := \Ans_{Q_2}(M_\Theta)$ for the higher-rung (``upper'') answer. If $U$ is a deterministic injective function of $\Theta$ on each level set of $L$, then $\mathsf{H}(U(\Theta) \mid L(\Theta) = \ell) = \mathsf{H}(\Theta \mid L(\Theta) = \ell)$, where $\mathsf{H}$ denotes Shannon entropy. In particular, if $\Theta$ is uniform over an ambiguity class of size $N$, this conditional entropy equals $\log_2 N$.
\end{theorem}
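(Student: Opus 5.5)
The plan is to prove Theorem~\ref{thm:ambiguity-entropy} directly from the invariance of Shannon entropy under injective relabelings, applied to the conditional distribution of $\Theta$ given $L(\Theta)=\ell$.

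\textbf{Setup.} Fix a level $\ell$ with $\Pr[L(\Theta)=\ell]>0$. Write $\Theta_\ell$ for a random variable distributed as $\Theta$ conditioned on $\{L(\Theta)=\ell\}$, with support $S_\ell \subseteq L^{-1}(\ell)$. Since $U$ is a deterministic function of $\Theta$, conditioning commutes with applying $U$: the conditional law of $U(\Theta)$ given $L(\Theta)=\ell$ is the pushforward of the law of $\Theta_\ell$ under $U$. So $\mathsf{H}(U(\Theta)\mid L(\Theta)=\ell)=\mathsf{H}(U(\Theta_\ell))$ and $\mathsf{H}(\Theta\mid L(\Theta)=\ell)=\mathsf{H}(\Theta_\ell)$.

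\textbf{Core step.} By hypothesis, $U$ restricted to $L^{-1}(\ell)\supseteq S_\ell$ is injective. Hence for every $u$ in the image,
\[
\Pr[U(\Theta_\ell)=u]=\Pr[\Theta_\ell=\theta_u],
\]
where $\theta_u$ is the unique preimage of $u$ in $S_\ell$. The two probability vectors are therefore the same multiset of numbers, and
\[
\mathsf{H}(U(\Theta_\ell))=-\sum_u p(u)\log_2 p(u)=-\sum_\theta p(\theta)\log_2 p(\theta)=\mathsf{H}(\Theta_\ell).
\]
Alternatively, one can cite the data-processing inequality in both directions: $U$ is a function of $\Theta_\ell$, and on $S_\ell$ the map $U$ has a left inverse, so $\Theta_\ell$ is a function of $U(\Theta_\ell)$.

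\textbf{Special case.} Suppose $\Theta$ is uniform over an ambiguity class of size $N$, so that, conditioned on $L=\ell$, $\Theta_\ell$ is uniform on $N$ parameters. Then $\mathsf{H}(\Theta_\ell)=\log_2 N$, and the identity above gives $\mathsf{H}(U(\Theta)\mid L(\Theta)=\ell)=\log_2 N$.

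Injectivity also means these $N$ parameters produce $N$ distinct higher-rung answers. So $N=|\calF_{Q_2\mid Q_1}(\ell;\calM)|$, which links this result to Theorem~\ref{thm:ambiguity-bits}. The families in Lemmas~\ref{lem:bipartite-injective} and~\ref{lem:xor-cf} instantiate this hypothesis, since there the lower-rung answer is constant and the higher-rung map is injective.

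\textbf{Main obstacle.} There is no real technical obstacle. The only care needed is in interpreting the hypothesis. First, ``injective on each level set'' only needs to hold on the support of the conditional law. Second, if $\Theta$ is uniform over a larger set, the conditional law on a level set is still uniform, so the special case applies per level set with $N=|L^{-1}(\ell)\cap\mathrm{supp}\,\Theta|$.
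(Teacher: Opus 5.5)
Your proposal is correct and follows the paper's route. Conditional on $L(\Theta)=\ell$, the map $\Theta \mapsto U(\Theta)$ is a bijection from the support onto its image, bijections preserve Shannon entropy, and the uniform case then gives $\log_2 N$; you simply write out the pushforward computation and the support caveat that the paper's one-line proof leaves implicit.
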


\begin{proof}
Conditional on $L(\Theta) = \ell$, the map $\Theta \mapsto U(\Theta)$ is a bijection on the support; bijections preserve Shannon entropy.
\end{proof}

Instantiating both theorems on the three families gives ambiguity-class sizes $|\calF_{\Int_1 \mid \Obs}(P^\star; \calM_{\mathrm{tree}}^n)| = n^{n-1}$, $|\calF_{\Int_1 \mid \Obs}(P^\star; \calM_{\mathrm{bip}}^n)| = 2^{m^2}$, and $|\calF_{\CF_1 \mid \Int_{\mathrm{all}}}(I^\star; \calM_{\mathrm{xor}}^n)| = 2^m$, yielding lower bounds $\Theta(n \log n)$, $\Theta(n^2)$, and $\Theta(n)$. In each construction these bounds are matched (up to lower-order terms) by an explicit upper-bound encoding (Pr\"ufer sequence, adjacency matrix, hidden bit string), so the gap equals the log of the ambiguity in both Kolmogorov and Shannon senses. The causal description gap is therefore not a Kolmogorov-incompressibility artifact: under natural uniform priors it appears as a Shannon conditional-entropy gap of the same order.

\subsection{Learning-Theoretic Consequence}
\label{sec:learning-main}

Theorem~\ref{thm:ambiguity-bits} translates to a no-free-lunch result for observational learning.

\begin{corollary}[No-free-lunch for observational learners]
\label{cor:nfl-main}
In the bipartite family $\calM_{\mathrm{bip}}^n$ all $2^{m^2}$ mechanisms share the same observational distribution $P^\star$. Hence for every sample size $N$, an observational dataset $D \sim (P^\star)^N$ satisfies $I(G; D) = 0$ in the Shannon sense, and any learner that outputs an interventional oracle from $D$ alone satisfies
\[
    \Pr_{G \sim \mathrm{Unif}(2^{A \times B})}\!\bigl[\widehat{\Int}_1 = \Int_1(M_G)\bigr] \;\leq\; 2^{-m^2},
\]
while any per-query predictor has expected absolute error at least $1/4$ (full statements and proofs in Appendix~\ref{app:learning}). The bound is information-theoretic, not computational: no amount of observational data closes the gap.
\end{corollary}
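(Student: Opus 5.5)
The statement has two parts: an exact-recovery bound of $2^{-m^2}$, and an expected absolute error of at least $1/4$ for per-query predictors. Both rest on one fact. The dataset $D$ is independent of $G$, because $\Obs(M_G)=P^\star$ for every $G$ by Lemma~\ref{lem:bipartite-obs}, so the law of $D$ is $(P^\star)^N$ regardless of $G$. I would establish this first and derive both claims from it.

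\textbf{Step 1: independence.} The joint law of $(G,D)$ factors as $\mathrm{Unif}(2^{A\times B})\otimes(P^\star)^N$. Hence $D\perp G$ and $I(G;D)=0$. A learner may be randomized, with internal randomness $R$ independent of $(G,D)$. Its output $\widehat{\Int}_1=\phi(D,R)$ is therefore independent of $G$ as well.

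\textbf{Step 2: exact recovery.} Fix any realization $\widehat{\Int}_1=\hat I$. By Lemma~\ref{lem:bipartite-injective}, at most one $G$ satisfies $\Int_1(M_G)=\hat I$. Since $G$ is uniform and independent of $\hat I$, the conditional success probability is at most $2^{-m^2}$. Averaging over $\hat I$ gives the bound.

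\textbf{Step 3: per-query error.} Take the query $q_{ij}=P^{\doop(X_{a_i}=0)}_{M_G}(X_{b_j}=0)$. By Lemma~\ref{lem:bipartite-intervention}, this equals $1$ if $(a_i,b_j)\in G$ and $1/2$ otherwise. Under the uniform prior, the edge indicator is a fair coin independent of the predictor's output $\hat q$. For every fixed $\hat q$,
\[
\tfrac12|\hat q-1|+\tfrac12|\hat q-\tfrac12| \;\ge\; \tfrac12\cdot\tfrac12 = \tfrac14,
\]
by the triangle inequality. The minimum is attained for any $\hat q\in[1/2,1]$. Averaging over $\hat q$ gives expected absolute error at least $1/4$ for every such query.

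\textbf{Main obstacle.} The only delicate point is fixing what "learner" means. It must be a (possibly randomized) map from $D$ alone, with randomness independent of $G$. Once this is stated, both bounds follow from the independence established in Step 1.
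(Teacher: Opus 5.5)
Your proposal is correct and follows the paper's argument essentially step for step. The three ingredients match: independence of $D$, and hence of the learner's output, from $G$ via the common distribution $P^\star$; injectivity of $G \mapsto \Int_1(M_G)$ to get the $2^{-m^2}$ bound; and the triangle-inequality bound $\tfrac12|a-1|+\tfrac12|a-\tfrac12|\ge\tfrac14$ for the per-query error. Your treatment goes slightly beyond the paper's, which assumes a deterministic learner and a uniformly random pair $(i,j)$, whereas you allow randomized learners and prove the bound for every fixed query pair.
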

%==============================================================================
\section{Conclusion}
\label{sec:conclusion}
%==============================================================================

We turned Pearl's qualitative hierarchy into a quantitative information hierarchy. Non-identifiability can be \emph{large} in a precise description-length sense: with constant-bit observations the residual interventional information can be $\Theta(n^2)$, robust to constant total-variation error and order-optimal for dense finite-gate-schema SCMs; even the full hard-do interventional oracle can leave a $\Theta(n)$ counterfactual gap. The governing quantity throughout is the log of residual higher-rung ambiguity.

\paragraph{Scope and next steps.} Our SCMs are binary, acyclic, adversarial by design, and need not satisfy positivity or faithfulness; this is a deliberate scope showing that observational adequacy alone imposes no small causal-description bound without further structure. How sparsity, positivity/noise, smoothness, or faithfulness shrink the residual ambiguity, the active-intervention complexity of closing the gap, and extensions beyond binary SCMs are natural next steps.

%==============================================================================
\bibliographystyle{plainnat}
\bibliography{references}

%==============================================================================
\appendix

\section{Rooted-Tree Construction: Full Details}
\label{app:rooted-tree}

This appendix gives the formal construction and full proofs underlying the
$\Theta(n \log n)$ rooted-tree separation. Theorem~\ref{thm:tree-separation} below
restates and proves the main-text Theorem~\ref{thm:tree-main}; together with
Corollary~\ref{cor:degree-optimality} it shows the bound is order-optimal in
the bounded-indegree regime.

\subsection*{Intuition: Hidden Information Flow}

Consider $n$ variables that are always perfectly correlated: either all $0$ or all $1$, with equal probability. A passive observer sees only this correlation. But \emph{how} does the correlation arise?

Imagine the variables arranged in a tree, with one root variable $X_r$ that is uniformly random, and all other variables copying their parent. The tree structure determines the \emph{direction of information flow}, but this structure is invisible to passive observation: all trees produce the same ``all equal'' distribution.

Interventions reveal the hidden structure. If we force $X_i = 0$, then:
\begin{itemize}
    \item All \emph{descendants} of $i$ in the tree become $0$ (they copy from ancestors, and $i$ is now $0$).
    \item All \emph{non-descendants} remain random (they don't depend on $i$).
\end{itemize}

So interventions reveal the descendant structure, which determines the tree. Since there are $n^{n-1}$ rooted labeled trees (by Cayley's formula), there are $n^{n-1}$ distinct interventional behaviors consistent with the same observations, and distinguishing among them requires $\log_2(n^{n-1}) = (n-1) \log n$ bits.

\subsection*{Formal Construction}

\paragraph{Notation.} For a rooted tree $T$ on $[n]$ with root $r$, we write $\pa_T(v)$ for the parent of $v$ in $T$ (defined for all $v \neq r$), and $\Desc_T(i)$ for the set of \emph{descendants} of node $i$ in $T$, including $i$ itself. That is, $j \in \Desc_T(i)$ if and only if $i$ lies on the unique path from $j$ to $r$.

\begin{definition}[Rooted-Tree SCM]
\label{def:tree-scm}
Let $T$ be a rooted labeled tree on vertex set $[n]$ with root $r$. Define the SCM $M_T$:
\begin{itemize}
    \item \textbf{Exogenous variables:} $U_r \sim \mathrm{Bernoulli}(1/2)$. All other $U_i$ are constant (say, $0$) and play no role.
    \item \textbf{Structural equations:}
    \begin{align*}
        X_r &:= U_r, \\
        X_v &:= X_{\pa_T(v)} \quad \text{for each } v \neq r,
    \end{align*}
    where $\pa_T(v)$ denotes the parent of $v$ in the rooted tree $T$.
\end{itemize}
\end{definition}

In words: the root takes a random value in $\{0,1\}$, and every other node copies its parent. Information flows from the root outward along the tree edges.

\begin{lemma}[All Trees Have the Same Observations]
\label{lem:tree-obs}
For every rooted tree $T$, the observational distribution of $M_T$ is:
\[
P^\star(x_1, \ldots, x_n) = \begin{cases}
1/2 & \text{if } x_1 = x_2 = \cdots = x_n = 0, \\
1/2 & \text{if } x_1 = x_2 = \cdots = x_n = 1, \\
0 & \text{otherwise}.
\end{cases}
\]
Consequently, $\DL_1(M_T) = K(P^\star \mid n) = O(1)$.
\end{lemma}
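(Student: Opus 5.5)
The plan is to prove the distributional claim by induction along a topological order of $T$, and then derive the complexity claim from the fact that $P^\star$ depends only on $n$.

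\textbf{Step 1: every node equals the root.} Fix a realization $U_r = u \in \{0,1\}$ and show that $X_v = u$ for every $v \in [n]$. Process the vertices in order of their depth in $T$, which is a topological order of the DAG.
\begin{itemize}
    \item The root satisfies $X_r = U_r = u$ by definition.
    \item For $v \neq r$, the parent $\pa_T(v)$ has strictly smaller depth, so by the induction hypothesis $X_{\pa_T(v)} = u$. The structural equation $X_v := X_{\pa_T(v)}$ then gives $X_v = u$.
\end{itemize}
Equivalently, each $v$ reaches $r$ by a unique finite path, and copying is transitive along that path. Hence the SCM maps $u$ deterministically to the constant vector $u^n$. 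The other exogenous variables are constant and never enter any equation, so they play no role.

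\textbf{Step 2: the observational distribution.} Since $U_r \sim \mathrm{Bernoulli}(1/2)$, the pushforward of $P_U$ under $u \mapsto u^n$ puts mass $1/2$ on $0^n$ and mass $1/2$ on $1^n$. Every other $x \in \{0,1\}^n$ has probability $0$. This is exactly $P^\star$, and it does not depend on $T$ or on the choice of root.

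\textbf{Step 3: the complexity bound.} A single fixed program, independent of $n$, reads $n$ and outputs $P^\star$ in the paper's canonical encoding: the rational probabilities $1/2$ at $0^n$ and $1^n$, and $0$ elsewhere, in lowest terms. Note that this object has length $2^n$, but it is computable from $n$ by a program of constant length. Therefore $\DL_1(M_T) = K(P^\star \mid n) = O(1)$.

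The only point requiring any care is the bookkeeping about encodings. The paper fixes a computable encoding of distributions, so the $O(1)$ holds relative to that encoding, and the constant is uniform over all $n$ and all trees $T$. No real obstacle is expected, because the induction in Step 1 is immediate.
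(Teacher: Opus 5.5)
Your proposal is correct and follows essentially the same route as the paper: induction on distance from the root to show every node equals $U_r$, then the Bernoulli$(1/2)$ pushforward giving $P^\star$, and a constant-length program (given $n$) outputting $P^\star$ for the $O(1)$ bound. Your remark that the encoded object has length $2^n$ but is computable from $n$ by a fixed program is a useful clarification that the paper leaves implicit.
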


\begin{proof}
Since $T$ is a connected tree and every non-root node copies its parent, any value assigned to the root propagates to all nodes. Specifically:
\begin{itemize}
    \item If $U_r = 0$: Then $X_r = 0$. For any node $v$ at distance $1$ from $r$, we have $X_v = X_{\pa_T(v)} = X_r = 0$. By induction on distance from $r$, all nodes have value $0$.
    \item If $U_r = 1$: By the same argument, all nodes have value $1$.
\end{itemize}
Since $U_r \sim \mathrm{Bernoulli}(1/2)$, the outcome is $0^n$ with probability $1/2$ and $1^n$ with probability $1/2$.

The distribution $P^\star$ can be specified by a constant-length program: ``output $1/2$ for $0^n$ and $1^n$, output $0$ otherwise.'' Thus $K(P^\star \mid n) = O(1)$.
\end{proof}

\begin{lemma}[Interventions Reveal Descendants]
\label{lem:tree-intervention}
For any node $i \in [n]$ and the intervention $\doop(X_i = 0)$:
\[
P_{M_T}^{\doop(X_i = 0)}(X_j = 0) = \begin{cases}
1 & \text{if } j \in \Desc_T(i), \\
1/2 & \text{if } j \notin \Desc_T(i),
\end{cases}
\]
where $\Desc_T(i)$ is the set of descendants of $i$ in $T$ (including $i$ itself).
\end{lemma}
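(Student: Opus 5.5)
The plan is to compute the intervened model $M_T^{\doop(X_i=0)}$ explicitly and split by whether $j$ is a descendant of $i$. Under $\doop(X_i = 0)$ the equation for $X_i$ becomes the constant $0$. Every other node keeps its copy equation, and the root keeps $X_r := U_r$ unless $i = r$. The intervened model is therefore a forest: the subtree rooted at $i$, namely $\Desc_T(i)$, is cut off from its former parent, and the remaining nodes $[n] \setminus \Desc_T(i)$ form a tree rooted at $r$ (empty if $i = r$). I will treat these two parts separately.

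\textbf{Case $j \in \Desc_T(i)$.} I will argue by induction on the depth of $j$ below $i$ in $T$. The base case $j = i$ holds because $X_i = 0$ by the intervention. For the inductive step, take $j \neq i$ in $\Desc_T(i)$. Then $\pa_T(j) \in \Desc_T(i)$ and sits strictly closer to $i$, and $X_j = X_{\pa_T(j)} = 0$. Since this holds for every realization of $U_r$, we get $P(X_j = 0) = 1$.

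\textbf{Case $j \notin \Desc_T(i)$.} In this case $i \neq r$ would be forced only if $j$ exists outside the subtree; indeed, if $i = r$ then $\Desc_T(i) = [n]$ and the case is vacuous, so we may assume $i \neq r$. The path from $j$ up to $r$ does not pass through $i$, by the definition of $\Desc_T(i)$. So every node on that path keeps its original copy equation, and the root keeps $X_r = U_r$. Inducting along this path, as in the proof of Lemma~\ref{lem:tree-obs}, gives $X_j = U_r$. Hence $P(X_j = 0) = \Pr[U_r = 0] = 1/2$.

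No step is a genuine obstacle. The only point needing care is that the path from $j$ to $r$ avoids $i$ exactly when $j \notin \Desc_T(i)$. This is the defining property of $\Desc_T(i)$ in the notation paragraph, together with the fact that a modified equation only affects nodes whose path to the root passes through $i$. I will state this explicitly so that the two inductions are clearly along root-directed paths.
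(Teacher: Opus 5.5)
Your proposal is correct and follows essentially the same two-case argument as the paper: descendants of $i$ are forced to $0$ through the chain of copy equations, and a non-descendant's path to the root avoids $i$, so it equals $U_r$. Your explicit inductions along root-directed paths and your separate handling of the vacuous case $i = r$ (where $\Desc_T(i) = [n]$) make precise what the paper states informally.
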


\begin{proof}
Under $\doop(X_i = 0)$, the equation for $X_i$ becomes $X_i := 0$. All other equations remain unchanged.

\textbf{Case 1: $j \in \Desc_T(i)$.} Every descendant of $i$ lies on a path from $i$ going away from the root. Along this path, each node copies its parent. Since $X_i = 0$ and descendants copy from ancestors (with $i$ being an ancestor), we have $X_j = 0$ deterministically.

\textbf{Case 2: $j \notin \Desc_T(i)$.} If $j$ is not a descendant of $i$, then the path from the root $r$ to $j$ does not pass through $i$. The value of $X_j$ depends only on nodes along this path, which are unaffected by the intervention on $X_i$. Thus $X_j$ still equals $X_r = U_r \sim \mathrm{Bernoulli}(1/2)$, so $P(X_j = 0) = 1/2$.
\end{proof}

\begin{lemma}[Descendant Sets Determine the Tree]
\label{lem:descendants-determine-tree}
The collection of descendant sets $\{\Desc_T(i) : i \in [n]\}$ uniquely determines the rooted tree $T$.
\end{lemma}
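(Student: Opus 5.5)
We reconstruct $T$ explicitly from the collection $\{\Desc_T(i) : i \in [n]\}$. Uniqueness then follows: two rooted trees with the same descendant-set collection are both outputs of the same deterministic reconstruction, so they coincide. The collection is indexed by $i$, i.e.\ we know which set belongs to which vertex. This indexing is exactly what the interventional family provides through Lemma~\ref{lem:tree-intervention}, since $P^{\doop(X_i=0)}_{M_T}$ is labeled by $i$.

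\textbf{Step 1: the root.} The root $r$ is the unique vertex with $\Desc_T(r) = [n]$. Every node lies on a path from itself to $r$, so $\Desc_T(r)=[n]$. For $i \neq r$, $r \notin \Desc_T(i)$, because $i$ does not lie on the trivial path from $r$ to $r$.

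\textbf{Step 2: the ancestor relation.} For distinct $i, j$, we have $j \in \Desc_T(i)$ iff $i$ is a proper ancestor of $j$. This is simply the definition of $\Desc_T$. Hence the family of sets $\{i : j \in \Desc_T(i)\}$ recovers, for each $j$, its ancestor set $\mathrm{Anc}(j)$ (including $j$), which is the vertex set of the path from $j$ to $r$.

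\textbf{Step 3: the parent.} For $j \neq r$, the parent $\pa_T(j)$ is the proper ancestor of $j$ with the largest descendant set that still excludes no one it should; equivalently, it is the element $i \in \mathrm{Anc}(j)\setminus\{j\}$ minimizing $|\Desc_T(i)|$. The key fact is that the ancestors of $j$ form a chain $j = v_0, v_1=\pa_T(j), \dots, v_k = r$ along the root path. Their descendant sets are strictly nested, $\Desc_T(v_0) \subsetneq \Desc_T(v_1) \subsetneq \cdots$, since $v_{t}\in\Desc_T(v_{t+1})\setminus\Desc_T(v_t)$. So the minimizer is unique and equals $v_1$. The parent map together with the root determines $T$.

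The only point needing care is the strict nesting in Step 3, which guarantees the minimizer is well defined. It follows directly from transitivity of the ancestor relation and acyclicity, so I expect no real obstacle.
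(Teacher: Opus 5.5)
Your proof is correct and matches the paper's: the root is the vertex whose descendant set is $[n]$, and each parent is the proper ancestor with the smallest descendant set, justified by strict nesting along the root path. One small slip: the witness for $\Desc_T(v_t) \subsetneq \Desc_T(v_{t+1})$ should be $v_{t+1}$, not $v_t$, because $v_t \in \Desc_T(v_t)$ under the inclusive convention. Separately, the stray phrase about the ``largest descendant set'' contradicts the minimization you actually use and should be deleted.
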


\begin{proof}
We show how to reconstruct $T$ from the descendant sets.

\textbf{Finding the root:} The root $r$ is the unique node with $\Desc_T(r) = [n]$ (every node is a descendant of the root).

\textbf{Finding parent-child relationships:} For any non-root node $v$, its parent $\pa_T(v)$ is characterized as follows. Consider all nodes $u \neq v$ such that $v \in \Desc_T(u)$ (i.e., $u$ is an ancestor of $v$). Among these, $\pa_T(v)$ is the one with the \emph{smallest} descendant set.

To see why: along the unique path from $v$ to the root $r$, descendant sets strictly increase:
\[
\Desc_T(v) \subsetneq \Desc_T(\pa_T(v)) \subsetneq \Desc_T(\pa_T(\pa_T(v))) \subsetneq \cdots \subsetneq \Desc_T(r) = [n].
\]
The strict inclusions hold because each step toward the root adds at least the current node to the descendant set. Thus the parent has the smallest descendant set among all ancestors.

This procedure reconstructs all parent-child edges, hence the entire tree.
\end{proof}

\begin{theorem}[Rooted-Tree Separation: $\Theta(n \log n)$]
\label{thm:tree-separation}
Let $\calM_{\mathrm{tree}}^n := \{M_T : T \text{ is a rooted labeled tree on } [n]\}$. Then:
\begin{enumerate}
    \item The map $T \mapsto \Int_1(M_T)$ is injective on $\calM_{\mathrm{tree}}^n$.
    \item \textbf{Lower bound:} There exists $T$ with $\Delta_{2|1}(M_T) \geq (n-1)\log_2 n - O(\log n)$.
    \item \textbf{Upper bound:} For all $T$, $\DL_2(M_T) \leq (n-1)\log_2 n + O(\log n)$.
    \item \textbf{High probability:} For uniformly random rooted labeled tree $T$,
    \[
    \Prob\bigl[\Delta_{2|1}(M_T) \geq (n-1)\log_2 n - c - O(\log n)\bigr] \geq 1 - 2^{-c}.
    \]
\end{enumerate}
Consequently, $\Delta_{2|1}(M_T) = \Theta(n \log n)$ for all but an exponentially small fraction of trees.
\end{theorem}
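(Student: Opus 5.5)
We prove the four items in order; the whole argument is assembled from Lemmas~\ref{lem:tree-obs}, \ref{lem:tree-intervention}, \ref{lem:descendants-determine-tree}, \ref{lem:conditioning-simple} and the counting bound (Lemma~\ref{lem:counting}).

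\textbf{Injectivity (item 1).} By Lemma~\ref{lem:tree-intervention}, the distribution $P_{M_T}^{\doop(X_i=0)}$ determines $\Desc_T(i)$. It is exactly the set of $j$ whose marginal $P(X_j=0)$ equals $1$ rather than $1/2$. Hence $\Int_1(M_T)$ determines the full collection $\{\Desc_T(i)\}_{i\in[n]}$. By Lemma~\ref{lem:descendants-determine-tree}, that collection determines $T$. So the map $T\mapsto \Int_1(M_T)$ has a left inverse and is injective. The decoding is moreover computable: read off each marginal, form the descendant sets, take the root as the unique $r$ with $\Desc_T(r)=[n]$, and take each parent as the ancestor with the smallest descendant set.

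\textbf{Upper bound (item 3).} A fixed program receives $n$ and a description of $T$, and outputs $\Int_1(M_T)$ by simulation. Each $P_{M_T}^{\doop(X_i=b)}$ is supported on at most two points, determined by whether each $j$ lies in $\Desc_T(i)$, so all $2n+1$ distributions are computable from $T$. For the description of $T$, use the Pr\"ufer code of the underlying unrooted labeled tree, which is a sequence in $[n]^{n-2}$. Encode it as a single integer below $n^{n-2}$ in $\lceil (n-2)\log_2 n\rceil$ bits; since $n$ is given, the length is known and no delimiter is needed. The root $r\in[n]$ then costs a further $\lceil\log_2 n\rceil$ bits. Prefix-freeness overhead is $O(\log n)$ at worst. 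This gives $(n-1)\log_2 n+O(\log n)$.

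\textbf{Lower bounds (items 2 and 4).} By Cayley's formula there are $n^{n-2}$ labeled trees, and each admits $n$ choices of root, giving $N=n^{n-1}$ rooted labeled trees. By item 1 these yield $N$ distinct strings $\Int_1(M_T)$. All of them share $\Obs(M_T)=P^\star$ with $K(P^\star\mid n)=O(1)$ (Lemma~\ref{lem:tree-obs}). Therefore Lemma~\ref{lem:conditioning-simple} gives $\Delta_{2|1}(M_T)=K(\Int_1(M_T)\mid n)\pm O(1)$. Equivalently, we can apply Lemma~\ref{lem:counting} directly with auxiliary input $w=(P^\star,n)$, which is identical for every $T$.

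Part (1) of the counting lemma then gives some $T$ with $\Delta_{2|1}(M_T)\ge \log_2 N-O(1)=(n-1)\log_2 n-O(1)$, which is item 2. Part (2) gives item 4. Since $T$ is uniform and the map is injective, the strings $\Int_1(M_T)$ are uniform over the $N$ distinct values. Hence the fraction with complexity below $\log_2 N-c-O(1)$ is at most $2^{-c}$, and a uniform $T$ lands in the good set with probability at least $1-2^{-c}$.

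\textbf{Final claim and main obstacle.} Take $c=\tfrac12(n-1)\log_2 n$ in item 4 and combine with item 3. This shows $\Delta_{2|1}=\Theta(n\log n)$ for all but an $n^{-(n-1)/2}$ fraction of trees, which is exponentially small.

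The only step requiring care is item 1. One must note that $\Int_1$ records marginals computable from each listed joint distribution, and that the values $1$ and $1/2$ are unambiguous. Everything else is bookkeeping of additive $O(\log n)$ terms.
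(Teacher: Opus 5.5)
Your proposal is correct and follows the paper's proof essentially step for step. The steps are: descendant-set decoding via Lemma~\ref{lem:tree-intervention} and Lemma~\ref{lem:descendants-determine-tree} for injectivity, Cayley's count of $n^{n-1}$ rooted trees with the counting lemma conditioned on the common $P^\star$ for items 2 and 4, and a Pr\"ufer-code simulation argument for item 3. Your choice to encode the Pr\"ufer sequence as a single integer below $n^{n-2}$ is a small but real tightening of the paper's wording, since encoding each label separately in $\lceil\log_2 n\rceil$ bits could cost $\Theta(n)$ extra bits rather than $O(\log n)$.
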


\begin{proof}
\textbf{Part 1 (Injectivity):} By Lemma~\ref{lem:tree-intervention}, from $\Int_1(M_T)$ we can read off:
\[
\Desc_T(i) = \{j \in [n] : P_{M_T}^{\doop(X_i = 0)}(X_j = 0) = 1\}
\]
for each $i$. By Lemma~\ref{lem:descendants-determine-tree}, the descendant sets determine $T$. Thus different trees yield different interventional families.

\textbf{Part 2 (Lower bound):} By Cayley's formula, there are $n^{n-1}$ rooted labeled trees on $[n]$. (Cayley's formula gives $n^{n-2}$ labeled unrooted trees on $[n]$; choosing a root from $n$ vertices gives $n^{n-1}$ rooted labeled trees.) By injectivity, there are $n^{n-1}$ distinct elements in $\{\Int_1(M_T) : T \in \calM_{\mathrm{tree}}^n\}$.

By the counting bound for Kolmogorov complexity: among $N$ distinct strings, at least one has complexity $\geq \log_2 N - O(1)$. Applying this with $N = n^{n-1}$:
\[
\max_T K(\Int_1(M_T) \mid n) \geq \log_2(n^{n-1}) - O(1) = (n-1)\log_2 n - O(1).
\]

By Lemma~\ref{lem:tree-obs}, $\DL_1(M_T) = O(1)$ for all $T$. By Lemma~\ref{lem:conditioning-simple}, $\Delta_{2|1}(M_T) = \DL_2(M_T) \pm O(\log n)$.

\textbf{Part 3 (Upper bound):} Given tree $T$, we can encode it using a Pr\"ufer sequence: a sequence of $n-2$ labels from $[n]$, requiring $(n-2)\log_2 n$ bits, plus $O(\log n)$ bits to specify the root. Total: $(n-1)\log_2 n + O(\log n)$ bits.

A constant-length program can then simulate $M_T$ and output $\Int_1(M_T)$. Thus $K(\Int_1(M_T) \mid n) \leq (n-1)\log_2 n + O(\log n)$.

\textbf{Part 4 (High probability):} By part~(2) of the counting bound (Lemma~\ref{lem:counting}), among the $n^{n-1}$ distinct strings $\{\Int_1(M_T)\}$, all but a $2^{-c}$ fraction satisfy $K(\Int_1(M_T) \mid n) \geq (n-1)\log_2 n - c - O(1)$. Transferring to $\Delta_{2|1}$ via Lemma~\ref{lem:conditioning-simple} incurs an additional $O(\log n)$ term.
\end{proof}

\section{Bipartite-Graph Construction: Figures and Full Proofs}
\label{app:bipartite-proofs}

This appendix supports Section~\ref{sec:quadratic} of the main paper. Figure~\ref{fig:bipartite-construction} illustrates Definition~\ref{def:bipartite-scm}; Figure~\ref{fig:gap-comparison} compares the resulting bound to the rooted-tree warm-up. We then restate and prove, in order, Lemmas~\ref{lem:bipartite-obs}--\ref{lem:bipartite-injective}, Theorem~\ref{thm:quadratic-separation} (quadratic separation), Theorem~\ref{thm:degree-upper} (degree-sensitive upper bound), and Theorem~\ref{thm:approx-quadratic} (finite-precision robustness).

\begin{figure}[t]
    \centering
    \begin{tikzpicture}[
        node distance=0.9cm,
        every node/.style={circle, draw, minimum size=0.6cm, font=\small},
        arr/.style={-{Stealth[length=2mm]}, thick}
    ]
    % Root
    \node (R) [fill=gray!30] {$r$};
    
    % Layer A
    \node (A1) [below left=1.3cm and 2.2cm of R] {$a_1$};
    \node (A2) [right=0.7cm of A1] {$a_2$};
    \node (A3) [right=0.7cm of A2] {$a_3$};
    
    % Layer B
    \node (B1) [below=1.6cm of A1] {$b_1$};
    \node (B2) [below=1.6cm of A2] {$b_2$};
    \node (B3) [below=1.6cm of A3] {$b_3$};
    
    % Root to A edges (solid)
    \draw[arr] (R) -- (A1);
    \draw[arr] (R) -- (A2);
    \draw[arr] (R) -- (A3);
    
    % Root to B edges (dashed, implicit)
    \draw[arr, gray!50, dashed] (R) to[out=-90, in=90] (B1);
    \draw[arr, gray!50, dashed] (R) to[out=-90, in=90] (B2);
    \draw[arr, gray!50, dashed] (R) to[out=-90, in=90] (B3);
    
    % Bipartite edges G (red, the hidden structure)
    \draw[arr, red!70!black, very thick] (A1) -- (B1);
    \draw[arr, red!70!black, very thick] (A1) -- (B2);
    \draw[arr, red!70!black, very thick] (A2) -- (B2);
    \draw[arr, red!70!black, very thick] (A3) -- (B3);
    
    % Annotations
    \node[draw=none, right=2.8cm of R, align=left, font=\small] {
        \textbf{Observations:}\\
        All variables equal.\\
        $P(0^n) = P(1^n) = 1/2$\\[0.2cm]
        $\DL_1(M_G) = O(1)$
    };
    
    \node[draw=none, right=2.8cm of B2, align=left, font=\small] {
        \textbf{Intervention $\doop(a_1 = 0)$:}\\
        $b_1, b_2 \to 0$ (edges from $a_1$)\\
        $b_3 \to$ random (no edge)\\[0.2cm]
        Reveals $N_G(a_1) = \{b_1, b_2\}$
    };
    
    % Legend
    \node[draw=none] at (0, -5) {\small Red edges = hidden graph $G$ (revealed by interventions)};
    
    \end{tikzpicture}
    \caption{\textbf{The Bipartite-Graph Construction.} The root $r$ feeds into layer $A$ (copy), and layer $A$ feeds into layer $B$ via AND gates controlled by graph $G$ (red edges). Observationally, all $2^{m^2}$ choices of $G$ produce the same distribution. Interventions on layer $A$ reveal which layer-$B$ nodes are neighbors.}
    \label{fig:bipartite-construction}
\end{figure}

\begin{figure}[t]
    \centering
    \begin{tikzpicture}
        \begin{axis}[
            width=11cm, height=7cm,
            xlabel={System Size $n$},
            ylabel={Causal Description Gap (bits)},
            xmin=2, xmax=30, ymin=0, ymax=250,
            legend pos=north west,
            legend style={font=\small},
            grid=major, grid style={dashed, gray!30}, thick
        ]
        \addplot[color=blue, line width=2pt, domain=2:30, samples=50] {3};
        \addlegendentry{$\DL_1$ (Observations) $= O(1)$}
        \addplot[color=orange, line width=1.5pt, dashed, domain=2:30, samples=50] {(x-1)*ln(x)/ln(2)};
        \addlegendentry{$\Delta_{2|1}$ (Trees) $= \Theta(n \log n)$}
        \addplot[color=red, line width=2pt, domain=3:30, samples=50] {((x-1)/2)^2};
        \addlegendentry{$\Delta_{2|1}$ (Bipartite) $= \Theta(n^2)$}
        \addplot[fill=red!15, draw=none, domain=3:30, samples=50] {((x-1)/2)^2} \closedcycle;
        \end{axis}
    \end{tikzpicture}
    \caption{\textbf{The Causal Description Gap grows quadratically.} Observational description length (blue) is constant. The tree construction (orange) achieves $\Theta(n \log n)$. The bipartite construction (red) achieves $\Theta(n^2)$, a quadratic gap between knowing ``what happens'' and knowing ``why.''}
    \label{fig:gap-comparison}
\end{figure}

\begin{proof}[Proof of Lemma~\ref{lem:bipartite-obs}]
We analyze the two cases for the root's value.

\textbf{Case $U_r = 0$:} Then $X_r = 0$. Since each $X_{a_i} = X_r$, we have $X_{a_i} = 0$ for all $i$. For each $b_j$, the equation is:
\[
X_{b_j} = X_r \wedge \bigwedge_{a_i : (a_i, b_j) \in G} X_{a_i} = 0 \wedge (\cdots) = 0.
\]
Thus all variables are $0$.

\textbf{Case $U_r = 1$:} Then $X_r = 1$ and $X_{a_i} = 1$ for all $i$. For each $b_j$:
\[
X_{b_j} = X_r \wedge \bigwedge_{a_i : (a_i, b_j) \in G} X_{a_i} = 1 \wedge 1 \wedge \cdots \wedge 1 = 1.
\]
(If $b_j$ has no neighbors, the empty AND is $1$, so $X_{b_j} = 1 \wedge 1 = 1$.)

Thus all variables are $1$.

Since $U_r \sim \mathrm{Bernoulli}(1/2)$, the outcome is $0^n$ or $1^n$ each with probability $1/2$. This is $P^\star$, independent of $G$.
\end{proof}

\begin{proof}[Proof of Lemma~\ref{lem:bipartite-intervention}]
Under $\doop(X_{a_i} = 0)$, the equation for $X_{a_i}$ becomes $X_{a_i} := 0$. The root $X_r = U_r$ remains random, and all other layer-$A$ nodes satisfy $X_{a_k} = X_r$ for $k \neq i$.

\textbf{Case $(a_i, b_j) \in G$:} The equation for $X_{b_j}$ includes $X_{a_i}$ as an AND input:
\[
X_{b_j} = X_r \wedge X_{a_i} \wedge \bigwedge_{a_k \neq a_i : (a_k, b_j) \in G} X_{a_k}.
\]
Since $X_{a_i} = 0$, the AND evaluates to $0$ regardless of other inputs. Thus $X_{b_j} = 0$ deterministically, so $P(X_{b_j} = 0) = 1$.

\textbf{Case $(a_i, b_j) \notin G$:} The equation for $X_{b_j}$ does not include $X_{a_i}$:
\[
X_{b_j} = X_r \wedge \bigwedge_{a_k : (a_k, b_j) \in G} X_{a_k}.
\]
All terms in this AND equal $X_r$ (since $X_{a_k} = X_r$ for $k \neq i$). If there are $d$ such terms:
\[
X_{b_j} = X_r \wedge X_r \wedge \cdots \wedge X_r = X_r.
\]
(If $b_j$ has no neighbors other than possibly $a_i$, and $(a_i, b_j) \notin G$, then the AND is either empty (giving $X_{b_j} = X_r$) or involves only other $a_k$ that equal $X_r$.)

Thus $X_{b_j} = X_r \sim \mathrm{Bernoulli}(1/2)$, so $P(X_{b_j} = 0) = 1/2$.
\end{proof}

\begin{proof}[Proof of Lemma~\ref{lem:bipartite-injective}]
By Lemma~\ref{lem:bipartite-intervention}, from $\Int_1(M_G)$ we can extract:
\[
N_G(a_i) = \{b_j \in B : P_{M_G}^{\doop(X_{a_i} = 0)}(X_{b_j} = 0) = 1\}
\]
for each $a_i \in A$. The neighborhoods $\{N_G(a_i)\}_{i=1}^m$ determine all edges of $G$:
\[
(a_i, b_j) \in G \iff b_j \in N_G(a_i).
\]
Thus $\Int_1(M_G)$ determines $G$ uniquely.
\end{proof}

\begin{proof}[Proof of Theorem~\ref{thm:quadratic-separation}]
\textbf{Lower bound:} There are $2^{m^2}$ bipartite graphs $G \subseteq A \times B$ (each of the $m^2$ potential edges is present or absent). By Lemma~\ref{lem:bipartite-injective}, there are $2^{m^2}$ distinct interventional families. By Lemma~\ref{lem:bipartite-obs}, all share $\Obs(M_G) = P^\star$ with $\DL_1 = O(1)$.

By the counting bound: among $2^{m^2}$ distinct strings, at least one has Kolmogorov complexity $\geq m^2 - O(1)$. By Lemma~\ref{lem:conditioning-simple}, $\Delta_{2|1}(M_G) = \DL_2(M_G) \pm O(\log n)$.

The high-probability statement follows from the stronger counting bound: all but a $2^{-c}$ fraction have complexity $\geq m^2 - c - O(1)$.

\textbf{Upper bound:} The graph $G$ can be encoded as an $m \times m$ binary adjacency matrix, requiring $m^2$ bits. Given $G$ and $n$, a constant-length program simulates $M_G$ and outputs $\Int_1(M_G)$. Thus $K(\Int_1(M_G) \mid n) \leq m^2 + O(1)$.
\end{proof}

\begin{proof}[Proof of Theorem~\ref{thm:degree-upper}]
An SCM in $\calM_{n,d}(\Gamma,\Pi)$ can be encoded by listing:
\begin{enumerate}
    \item a topological ordering of the $n$ variables, using $O(n \log n)$ bits;
    \item for each variable, its parent set, chosen from at most $\sum_{k=0}^{d} \binom{n-1}{k}$ possibilities;
    \item for each variable, a gate from the fixed library $\Gamma$ and a noise distribution from the fixed library $\Pi$, using $O(1)$ bits per variable.
\end{enumerate}
The entire SCM thus has a description of length
\[
    O(n \log n) + n \log_2\!\Bigl(\sum_{k=0}^{d} \binom{n-1}{k}\Bigr) + O_{\Gamma,\Pi}(n).
\]
Given this description and $n$, a fixed program can compute all single-node interventional distributions and output $\Int_1(M)$. Hence the same expression upper-bounds $K(\Int_1(M) \mid n) = \DL_2(M)$.

Finally,
\[
    \sum_{k=0}^{d} \binom{n-1}{k} \;\leq\; (d+1) \left(\frac{e(n-1)}{d}\right)^d
\]
for $1 \leq d \leq n-1$, yielding the stated asymptotic form. Since $\Delta_{2|1}(M) \leq \DL_2(M) + O(1)$, the same upper bound applies to the gap.
\end{proof}

\begin{proof}[Proof of Theorem~\ref{thm:approx-quadratic}]
\textbf{Upper bound.} An exact $m^2$-bit encoding of $G$ as an $m \times m$ binary adjacency matrix lets a fixed program output $\Int_1(M_G)$ exactly, hence also $\varepsilon$-approximately for any $\varepsilon \geq 0$. So $K_\varepsilon(\Int_1(M_G) \mid \Obs(M_G), n) \leq m^2 + O(1)$.

\textbf{Lower bound.} Let $G \neq G'$ be two distinct bipartite graphs. They differ on some edge $(a_i, b_j)$; WLOG $(a_i, b_j) \in G$ and $(a_i, b_j) \notin G'$. By Lemma~\ref{lem:bipartite-intervention},
\[
    P_{M_G}^{\doop(X_{a_i} = 0)}(X_{b_j} = 0) = 1, \qquad P_{M_{G'}}^{\doop(X_{a_i} = 0)}(X_{b_j} = 0) = 1/2.
\]
Total variation distance does not increase under marginalization, so the full interventional distributions satisfy
\[
    \mathrm{TV}\!\bigl( P_{M_G}^{\doop(X_{a_i} = 0)},\; P_{M_{G'}}^{\doop(X_{a_i} = 0)} \bigr) \;\geq\; \tfrac{1}{2}.
\]
Hence $d_{\Int}(\Int_1(M_G), \Int_1(M_{G'})) \geq 1/2$. For $\varepsilon < 1/4$, the open $\varepsilon$-balls around the $2^{m^2}$ answer objects are pairwise disjoint, so any $\varepsilon$-accurate description must still distinguish among $2^{m^2}$ possibilities. Applying the conditional counting bound (Lemma~\ref{lem:counting}, part~2) with the common observation $P^\star$ as side information yields the existence and high-probability lower bounds.
\end{proof}

\section{Modular-XOR Construction: Full Proofs}
\label{app:xor-proofs}

This appendix supports Section~\ref{sec:counterfactual}. We prove, in order, Lemmas~\ref{lem:xor-obs} (observational equivalence), \ref{lem:xor-int} (single-node interventional equivalence), \ref{lem:xor-cf} (counterfactual encoding is injective in $s$), \ref{lem:xor-all-int} (indistinguishability under \emph{all} atomic interventions), and Theorem~\ref{thm:cf-separation} (the $\Theta(n)$ counterfactual gap conditional on $\Int_{\mathrm{all}}$).

\begin{proof}[Proof of Lemma~\ref{lem:xor-obs}]
We show each module $(X_t, Y_t)$ is uniformly distributed on $\{0,1\}^2$.

\textbf{Case $s_t = 0$:} $X_t = U_{X_t}$ and $Y_t = U_{Y_t}$ are independent $\mathrm{Bernoulli}(1/2)$ variables.

\textbf{Case $s_t = 1$:} $X_t = U_{X_t} \sim \mathrm{Bernoulli}(1/2)$. For $Y_t = X_t \oplus U_{Y_t}$: for any fixed $x \in \{0,1\}$,
\[
\Prob(Y_t = y \mid X_t = x) = \Prob(U_{Y_t} = x \oplus y) = 1/2.
\]
Thus $Y_t$ is uniform and independent of $X_t$. (Formally: $\Prob(X_t = x, Y_t = y) = \Prob(X_t = x) \Prob(Y_t = y) = 1/4$.)

Since modules are independent and each is uniform on $\{0,1\}^2$, the full distribution is uniform on $\{0,1\}^{2m}$.
\end{proof}

\begin{proof}[Proof of Lemma~\ref{lem:xor-int}]
Consider any single-variable intervention $\doop(X_t = x)$ or $\doop(Y_t = y)$.

\textbf{Intervention $\doop(X_t = x)$:} The equation $X_t := x$ replaces whatever $X_t$ was. In module $t$:
\begin{itemize}
    \item If $s_t = 0$: $Y_t = U_{Y_t}$, uniform.
    \item If $s_t = 1$: $Y_t = x \oplus U_{Y_t}$, also uniform (since $U_{Y_t}$ is uniform).
\end{itemize}
Either way, $Y_t$ is uniform and independent of the intervention value. Other modules are unaffected.

\textbf{Intervention $\doop(Y_t = y)$:} The equation $Y_t := y$ replaces whatever $Y_t$ was. In both cases ($s_t = 0$ or $1$), $X_t = U_{X_t}$ remains uniform. Other modules unaffected.

Since all single-variable interventional distributions are the same across all $s$, the entire family $\Int_1(M_s)$ is independent of $s$.
\end{proof}

\begin{proof}[Proof of Lemma~\ref{lem:xor-cf}]
For each module $t$, consider the counterfactual query: ``What is $\Prob(Y_t^{(X_t \leftarrow 0)} = Y_t^{(X_t \leftarrow 1)})$?''

Let $Y_t^{(X_t \leftarrow b)}$ denote the value of $Y_t$ when we intervene with $\doop(X_t = b)$, evaluated on the same noise $U_{Y_t}$.

\textbf{Case $s_t = 0$:} $Y_t^{(X_t \leftarrow 0)} = U_{Y_t}$ and $Y_t^{(X_t \leftarrow 1)} = U_{Y_t}$. These are identical, so $\Prob(Y_t^{(X_t \leftarrow 0)} = Y_t^{(X_t \leftarrow 1)}) = 1$.

\textbf{Case $s_t = 1$:} $Y_t^{(X_t \leftarrow 0)} = 0 \oplus U_{Y_t} = U_{Y_t}$ and $Y_t^{(X_t \leftarrow 1)} = 1 \oplus U_{Y_t} = 1 - U_{Y_t}$. These are always different, so $\Prob(Y_t^{(X_t \leftarrow 0)} = Y_t^{(X_t \leftarrow 1)}) = 0$.

Thus the counterfactual query for module $t$ outputs $1$ if $s_t = 0$ and $0$ if $s_t = 1$. The $m$ queries together determine $s$.

Since $\CF_1(M_s)$ contains $P(X, X^{(t \leftarrow 0)}, X^{(t \leftarrow 1)})$ for all $t$, which includes the joint distribution of $Y_t^{(X_t \leftarrow 0)}$ and $Y_t^{(X_t \leftarrow 1)}$, it determines $s$.
\end{proof}

\begin{proof}[Proof of Lemma~\ref{lem:xor-all-int}]
It suffices to consider a single module $(X_t, Y_t)$, since modules are independent and interventions factor across modules. Fix an arbitrary intervention on any subset of $\{X_t, Y_t\}$.

If $Y_t$ is intervened on, then $Y_t$ is fixed by the intervention. The remaining variable $X_t$, if not intervened on, equals $U_{X_t}$ and is uniform in both the no-effect and XOR mechanisms.

If $Y_t$ is not intervened on but $X_t$ is set to $x$, then
\[
    Y_t = \begin{cases} U_{Y_t}, & s_t = 0,\\ x \oplus U_{Y_t}, & s_t = 1. \end{cases}
\]
In either case $Y_t$ is uniform Bernoulli and independent of all other modules.

Finally, if neither variable is intervened on, then for $s_t = 0$ we have $(X_t, Y_t) = (U_{X_t}, U_{Y_t})$, uniform on $\{0,1\}^2$; for $s_t = 1$ we have $(X_t, Y_t) = (U_{X_t}, U_{X_t} \oplus U_{Y_t})$, also uniform on $\{0,1\}^2$.

Thus every possible intervention produces the same distribution regardless of $s_t$. Taking the product over independent modules proves the claim.
\end{proof}

\begin{proof}[Proof of Theorem~\ref{thm:cf-separation}]
\textbf{Part 1.} By Lemma~\ref{lem:xor-all-int}, $\Int_{\mathrm{all}}(M_s)$ is identical for all $s$. Its description is constant: every intervention fixes the intervened variables and leaves each non-intervened module uniformly distributed in the manner described in the proof of Lemma~\ref{lem:xor-all-int}.

\textbf{Part 2 (Upper bound).} Given $s$ ($m$ bits) and $n$, a constant-length program simulates $M_s$ and outputs $\CF_1(M_s)$. Thus $K(\CF_1(M_s) \mid n) \leq m + O(1)$.

\textbf{Part 3 (Lower bound).} By Lemma~\ref{lem:xor-cf}, the map $s \mapsto \CF_1(M_s)$ is injective. Hence there are $2^m$ distinct counterfactual answer objects consistent with the same full hard-do interventional oracle. The conditional counting bound (Lemma~\ref{lem:counting}, part~2), applied with $w = (\Int_{\mathrm{all}}(M_s), n)$, yields the high-probability claim, since $\Int_{\mathrm{all}}(M_s)$ is constant across $s$ and contributes only $O(1)$ to the conditioning.
\end{proof}

\section{Learning-Theoretic Consequences}
\label{app:learning}

This appendix supports the brief discussion of \S\ref{sec:learning-main} of the main paper. We give two no-free-lunch results for observational learners drawn from the bipartite family $\calM_{\mathrm{bip}}^n$ (Definition~\ref{def:bipartite-scm}): one bounds the joint-recovery success rate, the other a per-query absolute error.

\begin{corollary}[No-free-lunch for observational learners]
\label{cor:no-free-lunch}
Let $G \sim \mathrm{Unif}(2^{A \times B})$ in the bipartite family $\calM_{\mathrm{bip}}^n$. Any learner $\calA$ that receives an i.i.d.\ sample of any size from $\Obs(M_G)$ and outputs a prediction $\widehat{\Int}_1$ satisfies
\[
\Prob[\widehat{\Int}_1 = \Int_1(M_G)] \;\leq\; 2^{-m^2}.
\]
\end{corollary}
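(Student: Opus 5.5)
The plan is to show that the learner's output is independent of $G$, and then use injectivity of $G \mapsto \Int_1(M_G)$ to bound the success probability by the largest prior mass of a single graph.

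\textbf{Step 1: the data carry no information about $G$.} By Lemma~\ref{lem:bipartite-obs}, $\Obs(M_G) = P^\star$ for every $G \subseteq A \times B$. The sample $D$ is drawn i.i.d.\ from $P^\star$, so for any sample size $N$ the conditional law of $D$ given $G$ is $(P^\star)^N$. This law does not depend on $G$, so $D$ is independent of $G$. The learner $\calA$ may use internal randomness $R$, taken independent of $(G, D)$. Then $\widehat{\Int}_1 = \calA(D, R)$ is a (possibly randomized) function of $(D,R)$, and $(D,R)$ is jointly independent of $G$. Hence $\widehat{\Int}_1$ is independent of $G$.

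\textbf{Step 2: reduce success to hitting one graph.} By Lemma~\ref{lem:bipartite-injective}, $G \mapsto \Int_1(M_G)$ is injective. So for any fixed candidate answer object $\iota$, at most one graph $G_\iota$ satisfies $\Int_1(M_{G_\iota}) = \iota$; if no such graph exists, the event below is empty.

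\textbf{Step 3: condition on the output.} Using independence,
\[
\Prob\bigl[\widehat{\Int}_1 = \Int_1(M_G)\bigr] = \sum_{\iota} \Prob[\widehat{\Int}_1 = \iota]\,\Prob[\Int_1(M_G) = \iota] \le \sum_{\iota}\Prob[\widehat{\Int}_1 = \iota]\cdot 2^{-m^2} = 2^{-m^2}.
\]
The inequality holds because $G$ is uniform over $2^{m^2}$ graphs and, by Step 2, each $\iota$ is matched by at most one of them. The sum ranges over the countably many finite strings $\iota$, so no measurability problems arise.

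\textbf{Main obstacle.} The only delicate step is justifying independence when the learner is randomized or the sample size is adaptive. I would handle both by conditioning on $R$ and on $N$. For each fixed value of $(R,N)$ the bound $2^{-m^2}$ holds by the calculation above, and averaging over $(R,N)$ preserves it. The same argument gives $I(G;D)=0$, since the law of $D$ given $G$ does not vary with $G$.
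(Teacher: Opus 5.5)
Your proposal is correct and follows the paper's proof: shared observational distribution $P^\star$ makes the sample, and hence the learner's output, independent of $G$, and injectivity of $G \mapsto \Int_1(M_G)$ then caps the success probability at $2^{-m^2}$. Your explicit sum over outputs and your handling of internal randomness and sample size simply spell out details the paper's one-paragraph argument leaves implicit.
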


\begin{proof}
All $M_G$ share the observational distribution $P^\star$, so any sample from $P^\star$ is statistically independent of $G$. The learner's output, a function of the sample, is therefore independent of $G$. With $2^{m^2}$ equiprobable targets $\{\Int_1(M_G)\}$ (the map $G \mapsto \Int_1(M_G)$ is injective by Lemma~\ref{lem:bipartite-injective}) and output independent of the true target, the success probability is at most $2^{-m^2}$.
\end{proof}

\begin{corollary}[Per-query prediction error]
\label{cor:per-query-error}
Let $G \sim \mathrm{Unif}(2^{A \times B})$ in the bipartite family $\calM_{\mathrm{bip}}^n$. Let $\hat{p}$ be any predictor (possibly depending on an i.i.d.\ sample of any size from $\Obs(M_G)$). For an independent uniformly random pair $(i,j) \in [m] \times [m]$, let $p_{i,j}(G) := P_{M_G}^{\doop(X_{a_i} = 0)}(X_{b_j} = 0)$. Then
\[
\E\bigl[|\hat{p}_{i,j} - p_{i,j}(G)|\bigr] \;\geq\; \tfrac{1}{4}.
\]
\end{corollary}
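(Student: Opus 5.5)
The plan is to condition on the observational sample and then reduce to a single unbiased coin per query. By Lemma~\ref{lem:bipartite-intervention}, $p_{i,j}(G) = 1$ if $(a_i,b_j)\in G$ and $p_{i,j}(G)=1/2$ otherwise. Under $G \sim \mathrm{Unif}(2^{A\times B})$, each edge indicator $\mathbf{1}[(a_i,b_j)\in G]$ is a fair coin. So for every fixed $(i,j)$, $p_{i,j}(G)$ is uniform on $\{1/2, 1\}$.

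First I will show that the predictor carries no information about $G$. By Lemma~\ref{lem:bipartite-obs}, every $M_G$ has $\Obs(M_G)=P^\star$. Hence an i.i.d.\ sample $D$ of any size has law $(P^\star)^N$ regardless of $G$, and $D$ is independent of $G$. The predictor $\hat p$ is a (possibly randomized, with internal randomness independent of $G$) function of $D$. Conditional on $(D, i, j)$ and on any internal randomness, the value $\hat p_{i,j}$ is therefore a fixed number $q$, while $p_{i,j}(G)$ is still uniform on $\{1/2,1\}$. For this last point I also need $(i,j)$ to be drawn independently of $G$ and $D$, which the statement gives.

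Second, I will bound the conditional error pointwise: for any real $q$,
\[
\tfrac12|q-\tfrac12| + \tfrac12|q-1| \;\geq\; \tfrac12\,|(q-\tfrac12)-(q-1)| \;=\; \tfrac14,
\]
by the triangle inequality, with equality for $q\in[1/2,1]$. Averaging this over $D$, $(i,j)$ and the internal randomness via the tower property gives $\E|\hat p_{i,j}-p_{i,j}(G)|\ge 1/4$.

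The only delicate step is the independence argument: $(i,j)$ must be drawn independently of $G$, and the conditional law of the single edge indicator must remain uniform given $D$ and $(i,j)$. Both follow from the independence of $D$ and $G$ together with the product structure of the uniform prior. Everything else is the one-line triangle inequality.
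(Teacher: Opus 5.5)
Your proposal is correct and follows the paper's proof essentially step for step. Both arguments use Lemma~\ref{lem:bipartite-intervention} to make $p_{i,j}(G)$ uniform on $\{1/2,1\}$, and both invoke the common observational law $P^\star$ to make the predictor independent of $G$. Both then finish with the pointwise bound $\tfrac12|q-1|+\tfrac12|q-\tfrac12|\ge\tfrac14$ from the triangle inequality, averaged over the conditioning. Your explicit conditioning on $(D,i,j)$ and the internal randomness is a slightly more careful version of the paper's conditioning on $\hat p_{i,j}=a$: it makes clear why the edge indicator stays fair even though both $\hat p_{i,j}$ and $p_{i,j}(G)$ depend on the same random pair $(i,j)$.
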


\begin{proof}
By Lemma~\ref{lem:bipartite-intervention}, $p_{i,j}(G) \in \{1/2, 1\}$ depending on whether $(a_i, b_j) \in G$. For $G \sim \mathrm{Unif}(2^{A \times B})$ the edge indicator is Bernoulli$(1/2)$, so $p_{i,j}(G)$ equals $1$ and $1/2$ each with probability $1/2$. The observational sample is independent of $G$, hence so is the predictor output $\hat{p}_{i,j}$. Condition on $\hat{p}_{i,j} = a$:
\[
\E\bigl[|\hat{p}_{i,j} - p_{i,j}(G)| \,\big|\, \hat{p}_{i,j} = a\bigr] \;=\; \tfrac{1}{2} |a - 1| + \tfrac{1}{2} |a - \tfrac{1}{2}| \;\geq\; \tfrac{1}{4},
\]
where the last step uses the triangle inequality $|a-1| + |a-\tfrac{1}{2}| \geq |1 - \tfrac{1}{2}| = \tfrac{1}{2}$. Averaging over $a$ proves the claim.
\end{proof}

This is not a computational limitation; it is information-theoretic. No algorithm, however powerful, can reliably predict interventional outcomes from observations when the underlying mechanism is drawn from our family.

\begin{remark}[Shannon Mutual Information is Zero]
\label{rem:mutual-info}
In the bipartite family $\calM_{\mathrm{bip}}^n$, a dataset $D \sim (P^\star)^N$ of $N$ i.i.d.\ observational samples is independent of the hidden graph $G$, since all mechanisms share the same observational distribution. Therefore $I(G; D) = 0$ in the Shannon sense for every sample size $N$. The lower bounds in Corollary~\ref{cor:no-free-lunch} and Corollary~\ref{cor:per-query-error} follow from this information-theoretic independence and do not rely on uncomputability of Kolmogorov complexity.
\end{remark}

%==============================================================================
% NeurIPS checklist intentionally omitted from arXiv version.

\end{document}